\pgfplotsset{compat=1.15}
\newcommand{\Rad}{\mathcal{R}}
\newcommand{\EmpRad}{\hat{\Rad}}
\newcommand{\Exp}{\mathbb{E}}
\newcommand{\Lbrack}{\Bigg \lbrack}
\newcommand{\Rbrack}{\Bigg \rbrack}
\newcommand{\LParen}{\Bigg (}
\newcommand{\RParen}{\Bigg )}
\newcommand{\Pipe}{\Bigg |}
\newcommand{\Pipes}{\Bigg \|}
\newcommand{\Mean}[2]{\frac{1}{#2} \sum_{#1=1}^{#2}}
\newcommand{\ExpSigma}{\underset{\sigma}{\Exp}}
\newtheorem{theorem}{Theorem}
\newtheorem{lemma}{Lemma}
\newtheorem{definition}{Definition}
\begin{document}

\title{Distance-Based Regularisation~of Deep \\ Networks for Fine-Tuning}

\author{Henry Gouk \& Timothy M. Hospedales \\
School of Informatics \\
University of Edinburgh \\
\texttt{\{henry.gouk,t.hospedales\}@.ed.ac.uk}
\And
Massimiliano Pontil \\
Department of Computer Science \\
University College London \\
\texttt{m.pontil@ucl.ac.uk}
}

\maketitle

\begin{abstract}
We investigate approaches to regularisation during fine-tuning of deep neural networks. First we provide a neural network generalisation bound based on Rademacher complexity that uses the distance the weights have moved from their initial values. This bound has no direct dependence on the number of weights and compares favourably to other bounds when applied to convolutional networks. Our bound is highly relevant for fine-tuning, because providing a network with a good initialisation based on transfer learning means that learning can modify the weights less, and hence achieve tighter generalisation. Inspired by this, we develop a simple yet effective fine-tuning algorithm that constrains the hypothesis class to a small sphere centred on the initial pre-trained weights, thus obtaining provably better generalisation performance than conventional transfer learning. Empirical evaluation shows that our algorithm works well, corroborating our theoretical results. It outperforms both state of the art fine-tuning competitors, and penalty-based alternatives that we show do not directly constrain the radius of the search space.
\end{abstract}

\section{Introduction}
The ImageNet Large Scale Visual Recognition Challenges have resulted in a number of neural network architectures that obtain high accuracy when trained on large datasets of labelled examples~\citep{he2016, tan2019, russakovsky2015}. Although these models have been shown to achieve excellent performance in these benchmarks, in many real-world scenarios such volumes of data are not available and one must resort to fine-tuning an existing model: taking the weights from a model trained on a large dataset, to initialise the weights for a model that will be trained on a small dataset. The assumption being that the weights from the pre-trained model provide a better initialisation than randomly generated weights. Approaches for fine-tuning are typically ad hoc, requiring one to experiment with many problem-dependent tricks, and often a process that will work for one problem will not work for another. Transforming fine-tuning from an art into a well principled procedure is therefore an attractive prospect. This paper investigates, from both a theoretical and empirical point of view, the impact of different regularisation strategies when fine-tuning a pre-trained network for a new task.

Existing fine-tuning regularisers focus on augmenting the cross entropy loss with terms that indirectly or directly penalise the distance the fine-tuned weights move from the pre-trained values. The intuition behind this seems sensible---the closer the fine-tuned weights are to the pre-trained weights, the less information is forgotten about the source dataset---but it is not obvious how this idea should be translated into an effective algorithm. One should expect that the choice of distance metric is quite important, but existing methods exclusively make use of Euclidean distance~\citep{li2019b, li2018a} without any theoretical or empirical justification regarding why that metric was chosen. These methods achieve only a small improvement in performance over standard fine-tuning, and it is reasonable to expect that using a metric more suited to the weight space of neural networks would lead to greater performance. Moreover, while the use of penalty terms to regularise neural networks is well established, the impact of using penalties vs constraints as regularisers has not been well studied in the context of deep learning.

In order to study the generalisation error of fine-tuned models, we derive new bounds on the empirical Rademacher complexity of neural networks based on the distance the trained weights move from their initial values. In contrast to existing theory (e.g., \citet{neyshabur2018, bartlett2017, long2019}), we do not resort to covering numbers or make use of distributions over models to make these arguments. By deriving two bounds utilising different distance metrics, but proved with the same techniques, we are able to conduct a controlled theoretical comparison of which metric one should use as the basis for a fine-tuning regularisation scheme. Our findings show that a metric based on the maximum absolute row sum (MARS) matrix norm is a more suitable measure of distance in the parameter space of convolutional neural networks than Euclidean distance. Additionally, we challenge the notion that using a penalty term to encourage the fine-tuned weights to lie near the pre-trained values is the best way to restrict the effective hypothesis class. We demonstrate that the equivalence of penalty methods and constraint methods in the case of linear models~\citep{oneto2016} does not translate to the context of deep learning. As a result, using projected stochastic subgradient methods to constrain the distance the weights in each layer can move from the initial settings can lead to improved performance.

Several regularisation methods are proposed, with the aim of both corroborating the theoretical analysis with empirical evidence, and improving the performance of fine-tuned networks. One of these approaches is a penalty-based method that regularises the distance from initialisation according to the MARS-based distance metric. The other two techniques make use of efficient projection functions to enforce constraints on the Euclidean and MARS distance between the pre-trained and fine-tuned weights throughout training. The experimental results demonstrate that projected subgradient methods improve performance over using penalty terms, and that the widely used Euclidean metric is typically not the best choice of metric to measure distances in network parameter space.
\section{Related Work}
The idea of developing an algorithm to restrict the distance of weights from some unbiased set of reference weights has been explored in various forms to improve the performance of fine-tuned networks. \citet{li2018a} presented the $\ell^2$-SP regulariser, which consists of adding a term to the objective function that penalises the squared $\ell^2$ distance of the trained weights from the initial weights. This is based on an idea initially made use of when performing domain adaptation, where it was applied to linear support vector machines~\citep{yang2007}. The subsequent work of \citet{li2019b} follows the intuition that the \emph{features} produced by the fine-tuned network should not differ too much from the pre-trained features. They also use Euclidean distance, but to measure distance between feature vectors rather than weights. The idea is extended to incorporate an attention mechanism that weights the importance of each channel. The method is implemented by adding a penalty term to the standard objective function. In contrast to these approaches, we solve a constrained optimisation problem rather than adding a penalty, and we demonstrate that the MARS norm is more effective than the Euclidean norm when measuring distance in weight space.

Many recent meta-learning algorithms also make use of idea that keeping fine-tuned weights close to their initial values is desirable. However, these approaches typically focus on developing methods for learning the initial weights, rather than working with pre-specified initial weights. The model-agnostic meta-learning approach~\citep{finn2017} does this by simulating few-shot learning tasks during the meta-learning phase in order to find a good set of initial weights for a neural network. Once the learned algorithm is deployed, it adapts to new few-shot learning tasks by fine-tuning the initial weights for a small number of iterations. \citet{denevi2018} proposes a modified penalty term for ridge regression where, instead of penalising the distance of the parameters from zero, they are regularised towards a bias vector. This bias vector is learned during the course of solving least squares problems on a collection of related tasks. \citet{denevi2019} extend this approach to a fully online setting and a more general family of linear models.


Previous work investigating the generalisation performance of neural network based on the distance the weights have travelled from their initial values has done so with the aim of explaining why existing methods for training models work well. \citet{bartlett2017} present a bound derived via covering numbers that shows the generalisation performance of fully connected networks is controlled by the distance the trained weights are from the initial weights. Their bound makes use of a metric that scales with the number of units in the network, which means if the result is extended to a class of simple convolutional networks then the generalisation performance will scale with the resolution of the feature maps. A similar bound can also be proved through the use of PAC-Bayesian analysis~\citep{neyshabur2018}. One can make use of different metrics and techniques for applying covering numbers to bounding generalisation that do not have the same implicit dependence on the number of units, but they instead depend directly on the number of weights in the network~\citep{long2019}. \citet{neyshabur2019} investigate the performance of two layer neural networks with ReLU activation functions, demonstrating that as the size of the hidden layer increases, the Frobenius distance (i.e., the Frobenius norm of the difference between initial and trained weight matrices) shrinks. Inspired by this observation, they show that one can construct a bound on the Rademacher complexity of this class by using Euclidean distance between the initial weights and trained weights of each individual unit. Although they bound the Rademacher complexity directly, they still incur an explicit dependence on the size of the hidden layer, and their analysis is restricted to fully connected networks with only a single hidden layer.

In contrast to these previous studies, our focus is on designing an algorithm that will improve the performance of fine-tuned networks, rather than explaining the performance of the standard fine-tuning methods that are already widespread. Therefore, we do not aim to infer what properties of existing methods enable networks to generalise well---we instead derive such properties and then develop algorithms that enforce them. Moreover, we put a particular emphasis on choosing a metric that is suitable for contemporary convolutional networks, and thus will not scale with the size of the feature maps, while also being easy to implement efficiently.
\section{Distance-Based Generalisation Bounds}
\label{sec:theory}
Throughout this section we will analyse a loss class consisting of feed-forward neural networks, $f(\vec x) = (\phi_{L} \circ ... \circ \phi_1)(\vec x)$, where each $\phi_j(\vec x) = \varphi(W_j \vec x)$ is a layer with a 1-Lipschitz activation function, $\varphi$. Both the norm of the weight matrix of each layer and the distance of the fine-tuned weights from the pre-trained weights are bounded from above. Formally, we define
\begin{equation*}
\mathcal{F}_\ast = \{(\vec x, y) \mapsto l(y, f(\vec x)) :
                     \|W_j\|_\ast \leq B_j^{\ast},\,
                     \|W_j^0\|_\ast \leq B_j^{\ast},\, 
                     \|W_j - W_j^0\|_\ast \leq D_j^{\ast} \},
\end{equation*}
where $j$ goes from 1 to $L$, $l$ is a $\rho$-Lipschitz loss function with a range of $\lbrack 0, 1\rbrack$, and $W_j^0$ is the pre-trained weight matrix for layer $j$. Each of the pre-trained weight matrices can be either random variables drawn from a distribution with support such that the constraints are always fulfilled, or they can be fixed (i.e., drawn from a Dirac delta distribution). The only other requirement is that they are independent of the training data used for fine-tuning. From an empirical perspective it is most useful to consider them non-random quantities. We have used $\ast$ as a placeholder for the norm used to measure the magnitude of weight matrices and the distance from the pre-trained weights. The primary focus is on using the MARS norm,
\begin{equation*}
    \|W\|_\infty = \max_j \sum_{i=1} |W_{j,i}|,
\end{equation*}
to prove a bound on the empirical Rademacher complexity~\citep{bartlett2001} of $\mathcal{F}_\infty$. Our main theoretical result, presented below, is obtained by modifying the ``peeling''-style arguments typically used to directly prove bounds on the Rademacher complexity of neural network hypothesis classes (for examples, see \citet{neyshabur2015, golowich2018}). Our modification of the argument allows us to rephrase the resulting theorem in terms of the distance the parameters can move from their initialisation during training.
\begin{theorem}
\label{thm:inf-rademacher}
For all $\delta \in (0, 1)$, the expected loss of all models in $\mathcal{F}_\infty$ is, with probability $1 - \delta$, bounded by
\begin{equation*}
    \Exp_{(\vec x, y)} \lbrack l(f(\vec x), y) \rbrack \leq \Mean{i}{m} l(f(\vec x_i), y_i) + \frac{4\sqrt{\textup{log}(2d)} c \rho C_\infty \sum_{j=1}^L \frac{D_j^\infty}{B_j^\infty} \prod_{j=1}^L 2B_j^\infty}{\sqrt{m}} + 3 \sqrt{\frac{\textup{log}(2/\delta)}{2m}},
\end{equation*}
where $m$ is the number of training examples, $c$ is the number of classes, $\vec x_i \in \mathbb{R}^d$, and $\|\vec x_i\|_\infty \leq C_\infty$.
\end{theorem}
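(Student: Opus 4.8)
The plan is to control the generalisation gap by the empirical Rademacher complexity of the loss class $\mathcal{F}_\infty$ and then whittle that complexity down layer by layer with a ``peeling'' argument arranged so that, at each layer, only the displacement $W_j - W_j^0$ -- and not the pre-trained matrix $W_j^0$ itself -- contributes. Since $l$ takes values in $[0,1]$, the standard symmetrisation bound gives, with probability $1-\delta$,
\[
\Exp_{(\vec x, y)}[l(f(\vec x), y)] \le \frac{1}{m}\sum_{i=1}^m l(f(\vec x_i), y_i) + 2\,\EmpRad(\mathcal{F}_\infty) + 3\sqrt{\frac{\log(2/\delta)}{2m}},
\]
and I would work with $\EmpRad$ \emph{without} the absolute value inside, which still suffices for this one-sided statement and -- crucially -- makes every data-independent function have zero complexity. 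Writing $l(f(\vec x_i),y_i) = l(f^0(\vec x_i),y_i) + \big(l(f(\vec x_i),y_i) - l(f^0(\vec x_i),y_i)\big)$, with $f^0$ the fully pre-trained network, the $l(f^0(\cdot),\cdot)$ term vanishes, and a vector-valued contraction for the $\rho$-Lipschitz loss bounds the remainder by $\rho$ times a sum over the $c$ output coordinates of $\EmpRad(\{\vec x \mapsto f(\vec x)_k - f^0(\vec x)_k\})$; this is where the factors $\rho$ and $c$ enter.

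The core estimate is on $\EmpRad(\{f(\cdot)_k - f^0(\cdot)_k\})$, and here I would telescope over layers. Let $\tilde f_j$ be the network whose first $j$ layers use $W_1^0,\dots,W_j^0$ and whose last $L-j$ layers use the trained weights, so that $\tilde f_0 = f$, $\tilde f_L = f^0$, and $f - f^0 = \sum_{j=1}^L (\tilde f_{j-1} - \tilde f_j)$; it then suffices to bound each $\EmpRad(\{(\tilde f_{j-1} - \tilde f_j)_k\})$ and sum. For a fixed $j$, the networks $\tilde f_{j-1}$ and $\tilde f_j$ agree on layers $1,\dots,j-1$ -- a \emph{fixed} map $\psi$ that is $\big(\prod_{i<j}B_i^\infty\big)$-Lipschitz in $\|\cdot\|_\infty$ -- and on layers $j+1,\dots,L$ -- a trained composition $\Gamma$ -- and differ only at layer $j$ ($W_j$ versus $W_j^0$), so the difference is $\Gamma_k(\varphi(W_j\psi(\vec x))) - \Gamma_k(\varphi(W_j^0\psi(\vec x)))$.

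Peeling $\Gamma$ from the top, each of its $L-j$ layers costs one contraction of the $1$-Lipschitz $\varphi$ and the extraction of one row of its weight matrix; since the MARS norm is exactly the $\ell_\infty\!\to\!\ell_\infty$ operator norm, each such row has $\ell_1$-norm at most $B_i^\infty$, so every peel multiplies by $2B_i^\infty$. After also contracting away the $\varphi$ of layer $j$, what is left is (up to a constant) $\tfrac{D_j^\infty}{m}\Exp_\sigma\|\sum_i \sigma_i \psi(\vec x_i)\|_\infty$, using $\|W_j - W_j^0\|_\infty \le D_j^\infty$ and the data-independence of the $\psi(\vec x_i)$. Peeling the fixed sub-network $\psi$ the same way down to the inputs contributes $\prod_{i<j}2B_i^\infty$, and Massart's lemma bounds $\tfrac{1}{m}\Exp_\sigma\|\sum_i \sigma_i \vec x_i\|_\infty$ -- a maximum over $d$ coordinatewise $\pm$-sums each of size at most $C_\infty$ -- by $C_\infty\sqrt{2\log(2d)/m}$, the sole source of the dimension factor. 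Multiplying the pieces, the layer-$j$ term is of order $C_\infty\sqrt{\log(2d)/m}\;D_j^\infty \prod_{i\ne j} 2B_i^\infty = C_\infty\sqrt{\log(2d)/m}\;\tfrac{D_j^\infty}{B_j^\infty}\prod_{i=1}^L 2B_i^\infty$; summing over $j$ and over the $c$ coordinates, and folding in the constants from symmetrisation and contraction, yields the stated bound.

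The main obstacle I anticipate is the contraction applied to a \emph{difference} of two layer outputs rather than to a single output: the textbook Ledoux--Talagrand inequality controls $\sup\sum_i \sigma_i \varphi(u_i)$, not $\sup\sum_i \sigma_i\big(\varphi(u_i) - \varphi(v_i^0)\big)$ with one argument pinned to a fixed value, and one must ensure it is the \emph{pre-trained} branch that is pinned so that no residual complexity leaks back in (this is exactly the modification of the usual peeling argument that lets distance-from-initialisation replace raw weight norm). Handling this carefully -- together with the bookkeeping needed to carry the $\ell_\infty$ operator norm cleanly through every peel so that the feature-map resolution never appears -- is the delicate part; the symmetrisation and Massart steps around it are routine.
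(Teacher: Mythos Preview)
Your telescoping plan has a real gap at the step ``peeling $\Gamma$ from the top''. In your decomposition $\tilde f_{j-1}-\tilde f_j = \Gamma_k\bigl(\varphi(W_j\psi(\cdot))\bigr) - \Gamma_k\bigl(\varphi(W_j^0\psi(\cdot))\bigr)$, the sub-network $\Gamma$ consists of the \emph{trained} layers $j{+}1,\ldots,L$ and therefore sits inside the supremum. Hence the second branch $\Gamma_k\bigl(\varphi(W_j^0\psi(\vec x_i))\bigr)$ is not ``pinned'' at all: although $W_j^0$ and $\psi(\vec x_i)$ are fixed, the value still varies with $W_{j+1:L}$. The Ledoux--Talagrand contraction you invoke applies to $\sup\sum_i\sigma_i\,\varphi(u_i)$ for a \emph{fixed} $1$-Lipschitz $\varphi$, not to $\sup\sum_i\sigma_i\bigl(\varphi(u_i)-\varphi(v_i)\bigr)$ when both $u_i$ and $v_i$ depend on the supremum variables. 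If you try to decouple the two branches (e.g.\ via $\sup(A-B)\le \sup A+\sup(-B)$ or via a vector contraction on $(s,t)\mapsto\varphi(s)-\varphi(t)$), each resulting term is controlled by the full norm $B_j^\infty$ at layer $j$, not by the displacement $D_j^\infty$, so the bound you are aiming for is lost. The same obstruction arises if you reverse the telescope so the pre-trained layers sit on top: then the upper map $\Psi$ is fixed, but both of its arguments still vary through the shared trained sub-network below.

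The paper sidesteps this entirely: it never forms $f-f^0$. It bounds $\EmpRad(\mathcal{F}_\infty)$ directly and, after applying Maurer's vector contraction and one scalar contraction at the top layer, splits the \emph{row vector} as $\vec w_{L,k} = (\vec w_{L,k}-\vec w_{L,k}^0)+\vec w_{L,k}^0$ at the \emph{linear} level. The displacement part is handled by H\"older and the standard peeling lemma (picking up $D_L^\infty\prod_{i<L}2B_i^\infty$), while the pre-trained part becomes $\Exp_\sigma\sup_{W_{1:L-1}}\vec w_{L,k}^{0\,T}\sum_i\sigma_i f_{L-1}(\vec x_i)$ --- an expression of the same shape but with a \emph{fixed} vector $\vec v=\vec w_{L,k}^0$ in front. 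A dedicated ``splitting'' lemma then repeats this recursively: at each level the fixed vector from above absorbs a factor $\|\vec v\|_1\le B^\infty$, the layer's displacement contributes $D_k^\infty$, and the recursion bottoms out because $\Exp_\sigma\bigl[\vec w_{1,k}^{0\,T}\sum_i\sigma_i\vec x_i\bigr]=0$. The crucial point is that the split happens \emph{after} the activation has been contracted away, so linearity makes the decomposition exact and the pre-trained piece genuinely fixed; your telescope splits before the nonlinearities above have been removed, which is why the branches cannot be separated.
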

Crucially, when $l$ is chosen carefully (e.g., the ramp loss), the expectation in Theorem~\ref{thm:inf-rademacher} is an upper bound for the expected classification error rate. The proof for Theorem~\ref{thm:inf-rademacher} can be found in the supplemental material. The two main terms in this theorem are (i) the product of bounds on the layer norms; and (ii) the summation, which is a bound on the distance the fine-tuned weights can be from the pre-trained weights. The first of these is primarily dependent on the weights obtained via pre-training, whereas (ii) can be controlled during the fine-tuning process. Moreover, one would expect that if better initial weights are selected via pre-training, then the distance the final weights will be from the initial values will be smaller, thus leading to better generalisation. This is the motivation behind the regularisers we develop in Section~\ref{sec:method}.

An analogous bound is also derived for the Frobenius norm (i.e., $\mathcal{F}_F$) to facilitate a theoretical comparison between the proposed regularisation method and $\ell^2$-SP, an existing approach that relies on penalising Euclidean distance between the pre-trained and fine-tuned parameters. In order to avoid a direct dependence on the number of parameters (as accomplished in Theorem~\ref{thm:inf-rademacher}) it is necessary to restrict $\mathcal{F}_F$ to use only ReLU activation functions.
\begin{theorem}
\label{thm:frobenius-rademacher}
For all $\delta \in (0, 1)$, the expected loss of all models in $\mathcal{F}_F$ is, with probability $1 - \delta$, bounded by
\begin{equation*}
    \underset{(\vec x, y)}{\Exp} \lbrack l(f(\vec x), y) \rbrack \leq \Mean{i}{m} l(f(\vec x_i), y_i) + \frac{2\sqrt{2}c\rho C_2 \sum_{j=1}^L \frac{D_j^F}{2B_j^F\prod_{i=1}^j\sqrt{n_i}} \overset{L}{\underset{j=1}{\prod}} 2B_j^F\sqrt{n_j}}{\sqrt{m}} + 3 \sqrt{\frac{\textup{log}(2/\delta)}{2m}},
\end{equation*}
where $m$ is the number of training examples, $c$ is the number of classes, $\|\vec x_i\|_2 \leq C_2$, $\varphi(\cdot) = \textup{ReLU}(\cdot)$, and $n_j$ is the number of columns in $W_j$.
\end{theorem}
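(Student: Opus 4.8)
\textbf{Proof proposal for Theorem~\ref{thm:frobenius-rademacher}.} The plan is to mirror the proof of Theorem~\ref{thm:inf-rademacher}, replacing the MARS norm and $\ell_\infty$ geometry throughout by the Frobenius norm and $\ell_2$ geometry, and to use positive homogeneity of the ReLU in exactly the place where the general-activation argument for $\mathcal{F}_\infty$ could not be pushed further without introducing a dependence on the number of parameters. First I would apply the usual route from generalisation to Rademacher complexity: since $l$ has range $[0,1]$, McDiarmid's inequality plus symmetrisation give, with probability $1-\delta$, $\Exp_{(\vec x, y)}[l(f(\vec x), y)] \le \Mean{i}{m} l(f(\vec x_i), y_i) + 2\EmpRad(\mathcal{F}_F) + 3\sqrt{\log(2/\delta)/(2m)}$, so it remains to bound $\EmpRad(\mathcal{F}_F)$ by $\tfrac{\sqrt2}{\sqrt m} c\rho C_2 \sum_j \tfrac{D_j^F}{2B_j^F \prod_{i=1}^j \sqrt{n_i}} \prod_{j=1}^L 2 B_j^F \sqrt{n_j}$. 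Stripping the $\rho$-Lipschitz, $c$-dimensional loss with a (vector) contraction inequality accounts for the leading $c\rho$ and reduces the task to the empirical Rademacher complexity of the network class $\{\vec x \mapsto f(\vec x)\}$ measured in $\ell_2$ on the output.

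For the network class I would run the ``peeling'' recursion from the output layer down, but modified so that peeling layer $j$ offers two options corresponding to the decomposition $W_j = W_j^0 + (W_j - W_j^0)$. In the first option I bound $\|W_j^0\|_F \le B_j^F$ and continue the recursion, paying the usual per-layer factor of $2 B_j^F \sqrt{n_j}$ (the $2$ from the Ledoux--Talagrand/Golowich-type contraction that removes the ReLU using its positive homogeneity, the $\sqrt{n_j}$ from converting the Frobenius bound on the $n_j$ columns of $W_j$ into the operator-type estimate the peeling step needs). In the second option I use $\|W_j - W_j^0\|_F \le D_j^F$ to ``cash out'' at layer $j$: the remaining layers $1,\dots,j-1$ are then controlled by the cheaper estimate that propagates the Rademacher sum through each of them at a cost of only $2 B_k^F$ (no $\sqrt{n_k}$), terminating in the base case $\Exp_\sigma \| \sum_i \sigma_i \vec x_i \|_2 \le \sqrt m\, C_2$. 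Taking the supremum over the two options at every layer and unrolling yields one additive term per layer $j$, namely $D_j^F \cdot \prod_{k > j}(2 B_k^F \sqrt{n_k}) \cdot \prod_{k < j}(2 B_k^F) \cdot C_2$, and regrouping these $L$ terms is exactly the sum $\sum_j \tfrac{D_j^F}{2B_j^F\prod_{i=1}^j\sqrt{n_i}}\prod_j 2B_j^F\sqrt{n_j}$ multiplied by $C_2$; the residual $\sqrt2$ comes from the contraction constant at the base of the recursion. Since each $W_j^0$ is independent of the fine-tuning sample, the fully pre-trained network contributes only a single fixed function, whose Rademacher term is already dominated, so no separate treatment is needed.

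The step I expect to be the main obstacle is making the ReLU contraction lemma interact cleanly with the split $W_j = W_j^0 + (W_j - W_j^0)$: the contraction/peeling lemma is phrased for a single norm-constrained matrix, whereas here I need it for the affine family $\{W_j^0 + \Delta : \|\Delta\|_F \le D_j^F\}$. I would handle this either by sub-additivity of the supremum followed by two applications of the lemma (radius $B_j^F$ for the $W_j^0$ part, radius $D_j^F$ for the $\Delta$ part) or by proving a mildly generalised peeling lemma that accepts an affine family directly; keeping careful track of which intermediate layers must be bounded by the ``trained'' constant $B_k^F$, whether a factor $\sqrt{n_k}$ is incurred in each branch, and verifying that the accumulated constants match the $2\sqrt 2$ in the statement, is where the care is required. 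Everything else---the generalisation bound, the loss contraction, and the Frobenius-to-operator-norm conversions supplying the $\sqrt{n_j}$ factors---is a routine adaptation of the corresponding parts of the proof of Theorem~\ref{thm:inf-rademacher}.
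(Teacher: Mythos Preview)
Your plan matches the paper's proof: it too strips the loss via the vector contraction of \citet{maurer2016}, pulls the sum over the $c$ output coordinates outside the supremum, and runs a layer-wise recursion in which each layer $k$ contributes a ``cash-out'' term governed by $D_k^F$ (bounded via the Golowich-style Frobenius peeling of Lemma~\ref{lem:frobenius-peeling}, which contributes factors $2B_i^F$ with no $\sqrt{n_i}$) and a ``continue'' term; the fully pre-trained residual is shown to vanish exactly because $\Exp_\sigma\big[\vec w_{1,k}^{0T}\sum_i\sigma_i\vec x_i\big]=0$.

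On the step you flag as the obstacle: the paper does not need a peeling lemma for the affine family $\{W_j^0+\Delta\}$, and the $\sqrt{n_j}$ does not arise from a Frobenius-to-operator-norm conversion. The ``continue'' branch is handled at the level of a single fixed row vector $\vec v$ (Lemma~\ref{lem:frobenius-splitting}): write $\vec v^T\varphi(W_k f_{k-1})=\sum_{j} v_j\,\varphi(\vec w_{k,j}^T f_{k-1})$, contract each scalar coordinate separately, split $\vec w_{k,j}=(\vec w_{k,j}-\vec w_{k,j}^0)+\vec w_{k,j}^0$ row by row, and then bound the sum over the $n$ output units by Cauchy--Schwarz, producing $\|\vec v\|_2\sqrt{n}\max_j(\cdot)$; the recursion continues with the maximising fixed row $\vec w_{k,j^\star}^{0}$ as the new $\vec v$. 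So the $\sqrt{n_j}$ factors come from this Cauchy--Schwarz-plus-max over output units, and with that mechanism in place the bookkeeping resolves exactly as you wrote.
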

We can make several observations by comparing our two bounds with each other and previously published bounds: (i) in contrast to Theorem~\ref{thm:inf-rademacher}, Theorem~\ref{thm:frobenius-rademacher} incurs a significant dependence on the resolution of the intermediate feature maps due to the $\sqrt{n_j}$ factors; (ii) each term in the summations in our bounds can be at most one, whereas the corresponding terms in \citet{bartlett2017} can be more than one due to the use of multiple types of norms; (iii) our bound from Theorem~\ref{thm:inf-rademacher} does not have a direct dependence on the number of weights, in contrast to the bound provided by~\citet{long2019}. More detailed comparisons can be found in the supplemental material.
\section{Fine-Tuning with Distance Regularisation}
\label{sec:method}
The analysis presented in Section~\ref{sec:theory} suggests that the weights in fine-tuned models should be close to the pre-trained weights in order to achieve good generalisation performance. More specifically, the learning process should search only within a set of weights within a predefined distance from the pre-trained weights. We discuss two strategies for accomplishing this: using projected subgradient methods to enforce a hard constraint, and augmenting the standard cross entropy objective with a term that penalises the distance between the pre-trained and fine-tuned weights. The method based on projection functions is attractive because it guarantees that the constraints will be fulfilled even if one uses a heuristic optimisation method to train the network parameters. However, the penalty-based approaches are more common in the literature, and convenient from an implementation point of view due to the ubiquity of automatic differentiation. Nevertheless, in contrast to the projection-based methods, the techniques that use penalties have weaker assurances on whether a constraint is actually being enforced. We discuss the drawbacks of this non-equivalence further in the supplemental material.

\subsection{Optimising with Projections}
One way to enforce constraints on the weights of neural networks during training is to use a variant of the projected stochastic subgradient method. This is similar to typical stochastic subgradient methods used when training neural networks, but has the additional step of applying a projection operation after each weight update to ensure that the new weights lie inside the set of feasible parameters that satisfy the constraints. In the case of classic subgradient descent, in order to guarantee convergence towards a stationary point the projection function must perform a Euclidean projection,
\begin{align}
\label{eq:euclidean-projection}
\begin{gathered}
    \pi(\widehat{W}) =\, \underset{W}{\arg\,\min}\, \frac{1}{2}\|W - \widehat{W}\|_2^2 \\
    \text{s.t.} \, g(W) \leq 0,
\end{gathered}
\end{align}
where $\widehat{W}$ are the newly updated parameters that may violate the constraint, $W$ are the projected parameters, and $g(\cdot)$ is a convex function specifying the constraint. Although the Euclidean projection is required for the classic projected subgradient method, other optimisation algorithms may require the projection to be performed with respect to a different metric. Looking at different optimisers as instantiations of mirror descent with different Bregman divergences is one way to determine the type of projection that should be performed. Unfortunately some of the most common optimisers used in deep learning, such as Adam~\citep{kingma2015}, are not guaranteed to converge even when there are no constraints on the parameters being optimised~\citep{reddi2018}. This makes extending it to perform constrained optimisation a purely heuristic endeavour, further compounded by the fact that it is also not clear which metric the projection should be performed with respect to. As such, the projections used in our approaches are performed with respect to the norm that is most convenient from an efficiency point of view.

Rather than attempting to constrain the distance between the all pre-trained and fine-tuned weights in the network using a single projection, constraints are applied on a layer-wise basis. This makes optimisation more manageable and also allows practitioners to favour fine-tuning certain parts of the network---e.g., if one is fine-tuning a network pre-trained on photos to perform a task on paintings where the same underlying classes are present, one might wish to allocate more fine-tuning capacity to earlier layers in the network. The types of constraints that we wish to enforce take the form
\begin{equation*}
    \|W_j - W_j^{(0)}\|_\ast \leq \gamma_j,
\end{equation*}
where $\gamma_j$ is a hyperparameter that corresponds to the maximum allowable distance between the pre-trained weights and the fine-tuned weights for layer $j$. With a minor rearrangement, this yields a constraint specification in the form required for Equation~\ref{eq:euclidean-projection},
\begin{equation*}
    g_{j}^{\ast}(W_j^{(0)}, W_j, \gamma_j) = \|W_j - W_j^{(0)}\|_\ast - \gamma_j,
\end{equation*}
where we have made the dependence on the pre-trained weights and the hyperparameter explicit. The resulting optimisation problem is
\begin{align}
\label{eq:constraint-objective}
\begin{gathered}
    \min_{W_{1:L}} \sum_{i=1}^m l(y_i, (\phi_L \circ ... \circ \phi_1)(\vec x_i)) \\
    \|W_j - W_j^{(0)}\|_\ast \leq \gamma_j \quad \forall j \in \{1 \, ... \, L\}.
\end{gathered}
\end{align}

The remainder of this section presents derivations for the projection functions, $\pi_F$ and $\pi_\infty$, corresponding to this constraint specification when it is instantiated with the Frobenius norm and the MARS norm, respectively. We provide pseudocode in the supplementary material that illustrates how these projections are integrated to the neural network fine-tuning procedure when using a variant of the stochastic subgradient method. We refer to the Frobenius norm instantiation as $\ell^2$-PGM and the MARS norm version as MARS-PGM, where the PGM indicates the use of projection gradient methods.

\subsubsection{Constraining Frobenius Distance}
When using the Frobenius distance, Equation~\ref{eq:euclidean-projection} can be rewritten as
\begin{align*}
\begin{gathered}
    \pi_F(W^{(0)}, \widehat{W}, \gamma) = \, \underset{W}{\arg\,\min\,} \, \frac{1}{2}\|W - \widehat{W}\|_F^2 \\
    \text{s.t.} \, \|W - W^{(0)}\|_F - \gamma \leq 0.
\end{gathered}
\end{align*}
To simplify the problem, we can instead work on a translated version of the same parameter space where $W^{(0)}$ is the origin. Setting $\widehat{T} = \widehat{W} - W^{(0)}$ and $T = W - W^{(0)}$, the problem becomes
\begin{align*}
\begin{gathered}
    \pi_2(\widehat{T}, \gamma) = \, \underset{T}{\arg\,\min\,} \, \frac{1}{2}\|T - \widehat{T}\|_F^2 \\
    \text{s.t.} \|T\|_F - \gamma \leq 0,
\end{gathered}
\end{align*}
which is the Euclidean projection onto the $\ell^2$ ball with radius $\gamma$, and has the known closed form solution
\begin{equation*}
    \pi_2(\widehat{T}, \gamma) = \frac{1}{\max \Big ( 1, \frac{\|\widehat{T}\|_F}{\gamma} \Big ) }\widehat{T}.
\end{equation*}
Expanding the definition of $\widehat{T}$ and translating back into the correct parameter space yields the Frobenius distance projection function,
\begin{equation}
    \pi_F(W^{(0)}, \widehat{W}, \gamma) = W^{(0)} + \frac{1}{\max \Big ( 1, \frac{\|\widehat{W} - W^{(0)}\|_F}{\gamma} \Big ) }(\widehat{W} - W^{(0)}).
\end{equation}

\subsubsection{Constraining MARS Distance}
The constraint on the MARS distance can be equivalently expressed as a collection of constraints on the $\ell^1$ distance of each row in the weight matrix from the corresponding row in the pre-trained weight matrix. That is,
\begin{equation*}
    \|W - W^{(0)}\|_\infty \leq \gamma \iff \|\vec w_i - \vec w_i^{(0)}\|_1 \leq \gamma \quad \forall i,
\end{equation*}
where $\vec w_i$ is the $i$th row of $W$. One can then make use of the same translation trick used to derive the Frobenius distance projection function to change the $\ell^1$ distance constraints to $\ell^1$ norm constraints,
\begin{align}
\label{eq:l1-projection}
\begin{gathered}
    \pi_1(\widehat{\vec t_i}, \gamma) = \, \underset{\vec t_i}{\arg\,\min} \, \frac{1}{2}\|\vec t_i - \widehat{\vec t_i}\|_2^2 \\
    \text{s.t.} \, \|\vec t_i\|_1 - \gamma \leq 0,
\end{gathered}
\end{align}
where $\vec t = \vec w_i - \vec w_i^{(0)}$. The problem in Equation~\ref{eq:l1-projection} is a Euclidean projection onto the $\ell^1$ ball with radius $\gamma$, for which there is no known closed form solution. There exist algorithms to find the $\ell^1$ projection in time linearly proportional to the dimensionality of the vector~\citep{duchi2008}, but they are not amenable to implementation on graphics processing units due to the sequential nature of the computations involved. Instead, we apply a projection that minimises the $\ell^1$ distance between the original point and its projection, subject to the projected point lying inside the $\ell^1$ ball with radius $\gamma$,
\begin{equation*}
    \pi_1(\widehat{\vec t_i}, \gamma) = \frac{1}{\max(1, \frac{\|\widehat{\vec t_i}\|_1}{\gamma})}\widehat{\vec t_i}.
\end{equation*}
This projection, while not providing the closest feasible point measured in Euclidean distance, still provides a point that satisfies the constraints, but is trivial to implement efficiently. Finally, the projection function for the entire weight matrix is given by applying $\pi_1$ row-wise, and translating back into the correct parameter space,
\begin{equation*}
    \pi_\infty(W^{(0)}, \widehat{W}, \gamma) =
    \begin{bmatrix}
        \pi_1(\widehat{\vec w_1} - \vec w_1^{(0)}, \gamma) + \vec w_1^{(0)} \\
        \vdots \\
        \pi_1(\widehat{\vec w_n} - \vec w_n^{(0)}, \gamma) + \vec w_n^{(0)}
    \end{bmatrix},
\end{equation*}
where $\widehat{W}$ contains $n$ rows.

\subsection{Penalty Methods}
\label{sec:method-penalty}
One popular approach in the literature to encourage a model to not move too far from a set of initial weights is to augment the loss function with a penalty term. In our case, this would involve taking the standard objective for the problem at hand (e.g., cross entropy or the hinge loss), and adding penalty terms corresponding to each layer,
\begin{align}
\label{eq:penalty-objective}
\begin{split}
    \min_{W_{1:L}} &\sum_{i=1}^m l(y_i, f(\vec x_i)) + \sum_{j=1}^L \lambda_j \|W_j - W_j^{(0)}\|_\ast,
\end{split}
\end{align}
where $\lambda_j$ are hyperparameters used to balance the regularisation terms with the main loss function. Due to the subdifferentiability of the two norms considered in this paper, instantiations of Equation~\ref{eq:penalty-objective} can be trained via automatic differentiation and a variant of the stochastic subgradient method. For the Frobenius norm, we actually penalise the squared Frobenius norm, which recovers the $\ell^2$-SP approach of~\citet{li2018a}. We refer to the instantiation that penalises the MARS distance as MARS-SP.
\section{Experiments}
\label{sec:experiments}
This section provides an empirical investigation into the predictive performance of the proposed methods relative to existing approaches for regularising fine-tuning, and also conducts experiments to demonstrate which properties of the novel algorithms are responsible for the change in performance. Two network architectures are used: ResNet-101~\citep{he2016}, which is representative of a typical large neural network, and EfficientNetB0~\citep{tan2019}, a leading architecture intended for use on mobile devices. Both networks are pre-trained on the 2012 ImageNet Large Scale Visual Recognition Challenge dataset~\citep{russakovsky2015}. The Adam optimiser is used for all experiments~\citep{kingma2015}. Information regarding the datasets and hyperparameter optimisation procedure can be found in the supplemental material.

\subsection{Predictive Performance}
\label{sec:experiments-performance}

\begin{table*}[t]
    \caption{Results obtained with different regularisation approaches when fine-tuning ResNet-101 (top) and EfficientNetB0 (bottom) models pre-trained on the ILSVRC-2012 subset of ImageNet. We report the mean $\pm$ std. dev. of accuracy measured across five different random seeds.}
    \label{tab:resnet101}
    \centering
    \resizebox{\textwidth}{!}{\begin{tabular}{lcccccccc}
        \toprule
        Regularisation              & Aircraft                & Butterfly               & Flowers                 & Pets                    & PubFig                  & DTD                     & Caltech & Avg. Rank\\
        \midrule
        None                        & 51.81$\pm$0.87          & 70.02$\pm$0.16          & 76.68$\pm$1.07          & 84.19$\pm$0.34          & 75.36$\pm$0.67          & 66.15$\pm$0.55          & 75.32$\pm$0.61 & 6.71 \\
        DELTA \citep{li2019b}       & 60.38$\pm$1.26          & 77.91$\pm$0.24          & 86.57$\pm$0.27          & 88.11$\pm$0.52          & 82.23$\pm$2.48          & 69.38$\pm$0.68          & 78.88$\pm$0.27 & 4.14 \\
        LS \citep{muller2019}       & 60.86$\pm$0.65          & 76.52$\pm$0.23          & 86.49$\pm$0.38          & \textbf{90.50$\pm$0.31} & 84.38$\pm$0.47          & 68.08$\pm$0.81          & \textbf{80.04$\pm$0.10} & 3.14 \\
        $\ell^2$-SP \citep{li2018a} & 60.16$\pm$1.33          & 76.56$\pm$0.19          & 83.11$\pm$0.27          & 86.23$\pm$0.41          & 83.79$\pm$3.69          & 69.87$\pm$0.24          & 79.94$\pm$0.17 & 4.00 \\
        MARS-SP                     & 58.52$\pm$1.23          & 69.54$\pm$0.36          & 75.90$\pm$1.04          & 84.22$\pm$0.45          & 79.17$\pm$0.73          & 69.53$\pm$0.93          & 79.19$\pm$0.33 & 5.57 \\
        \midrule
        $\ell^2$-PGM                & 69.61$\pm$0.15          & 77.97$\pm$0.13          & 86.91$\pm$0.66          & 90.47$\pm$0.33          & 84.14$\pm$0.81          & \textbf{70.97$\pm$0.71} & 78.45$\pm$0.13 & 2.57 \\
        MARS-PGM                    & \textbf{72.04$\pm$0.70} & \textbf{79.50$\pm$0.36} & \textbf{87.42$\pm$0.41} & 89.23$\pm$1.36          & \textbf{88.75$\pm$0.28} & 70.23$\pm$0.53          & 79.11$\pm$0.19 & \textbf{1.86} \\
        \bottomrule
    \end{tabular}}

    \resizebox{\textwidth}{!}{\begin{tabular}{lcccccccc}
        \toprule
        Regularisation              & Aircraft                & Butterfly               & Flowers                 & Pets                    & PubFig                  & DTD                     & Caltech & Avg. Rank\\
        \midrule
        None                        & 54.58$\pm$0.65          & 69.43$\pm$0.47          & 77.43$\pm$0.14          & 84.87$\pm$0.19          & 75.51$\pm$0.83          & 64.98$\pm$0.43          & 81.07$\pm$0.38 & 7.00 \\
        DELTA \citep{li2019b}       & 70.61$\pm$0.18          & 79.61$\pm$0.21          & 84.60$\pm$0.23          & 89.27$\pm$0.22          & 88.62$\pm$0.54          & 71.37$\pm$0.35          & 81.53$\pm$0.38 & 3.57 \\
        LS \citep{muller2019}       & 56.14$\pm$0.27          & 71.52$\pm$0.14          & 83.42$\pm$0.70          & 84.95$\pm$0.55          & 77.05$\pm$0.31          & 65.53$\pm$0.26          & 83.12$\pm$0.17 & 5.43 \\
        $\ell^2$-SP \citep{li2018a} & 69.26$\pm$0.26          & 78.88$\pm$0.27          & 86.61$\pm$0.48          & 89.79$\pm$0.28          & 84.04$\pm$0.98          & 71.12$\pm$0.52          & 82.91$\pm$0.39 & 3.71 \\
        MARS-SP                     & 66.96$\pm$0.49          & 72.01$\pm$0.20          & 77.79$\pm$0.48          & 89.24$\pm$0.40          & 85.33$\pm$0.59          & 69.41$\pm$0.49          & 82.10$\pm$0.24 & 5.0 \\
        \midrule
        $\ell^2$-PGM                & 70.87$\pm$0.33          & 81.81$\pm$0.22          & 86.95$\pm$0.17          & 89.33$\pm$0.19          & 88.45$\pm$0.36          & 71.63$\pm$0.59          & 84.30$\pm$0.09 & 2.29\\
        MARS-PGM                    & \textbf{75.22$\pm$0.34} & \textbf{82.32$\pm$0.10} & \textbf{90.36$\pm$0.15} & \textbf{91.38$\pm$0.24} & \textbf{90.30$\pm$0.40} & \textbf{73.57$\pm$0.38} & \textbf{84.84$\pm$0.17} & \textbf{1.00}\\
        \bottomrule
    \end{tabular}}
\end{table*}

The first set of experiments are a performance comparison of the proposed methods and existing regularisation approaches for fine-tuning. The baselines considered are standard fine-tuning with no specialised regularisation, $\ell^2$-SP~\citep{li2018a}, DELTA~\citep{li2019b}, and label smoothing (LS) as formalised in~\citet{muller2019}. Once hyperparameters are obtained, each network architecture and regulariser combination is fine-tuned on both the training and validation folds of each dataset. The fine-tuning process is repeated five times with different random seeds to measure the robustness of each method to the composition of minibatches and initialisation of the final linear layer, which is trained from scratch. Test set accuracy is reported for the ResNet-101 and EfficientNetB0 architectures in Table~\ref{tab:resnet101}.

Comparing the average ranks of the methods across different datasets~\citep{demsar2006}, the most salient trend is that the projection-based methods exhibit a significant increase in accuracy over their penalty counterparts that use the same distance metrics. This suggests that, in the case of fine-tuning, using a projection to enforce a constraint on the weights throughout the training process is a better regularisation strategy than adding a term to the objective function that penalises the deviations of weights from their pre-trained values. Additionally, looking further at the relative performance of the two projection-based regularisers, we can see that the MARS distance variant is more often a better choice than the Frobenius distance. This observation further supports the conclusions of the comparison of the bounds in Section~\ref{sec:theory}.

\subsection{Distance from Initialisation}

\begin{figure*}[t!]
\centering
    \subfloat[None]{\resizebox{0.25\textwidth}{!}{\begin{tikzpicture}
        \begin{axis}[ybar, ymin=0, ymax=50, xlabel={MARS Distance}, ylabel={Number of Layers}]
            \addplot +[hist={bins=15, data min=0, data max=5}] table [y index=0, col sep=comma]{data/pets-resnet101-none.csv};
        \end{axis}
    \end{tikzpicture}}}
    \subfloat[MARS-PGM]{\resizebox{0.25\textwidth}{!}{\begin{tikzpicture}
        \begin{axis}[ybar, ymin=0, ymax=50, xlabel={MARS Distance}, ylabel={Number of Layers}]
            \addplot +[hist={bins=15, data min=0, data max=5}] table [y index=0, col sep=comma]{data/pets-resnet101-infop-constraint.csv};
            \draw [color=black, dashed] (2.62, 0) -- (2.62, 50);
        \end{axis}
    \end{tikzpicture}}}
    \subfloat[MARS-SP]{\resizebox{0.25\textwidth}{!}{\begin{tikzpicture}
        \begin{axis}[ybar, ymin=0, xlabel={MARS Distance}, ylabel={Number of Layers}]
            \addplot +[hist={bins=15, data min=0.0, data max=5}] table [y index=0, col sep=comma]{data/pets-resnet101-infop-penalty.csv};
        \end{axis}
    \end{tikzpicture}}}
    \subfloat[DELTA]{\resizebox{0.25\textwidth}{!}{\begin{tikzpicture}
        \begin{axis}[ybar, ymin=0, xlabel={MARS Distance}, ylabel={Number of Layers}]
            \addplot +[hist={bins=15, data min=0.0, data max=5}] table [y index=0, col sep=comma]{data/pets-resnet101-delta.csv};
        \end{axis}
    \end{tikzpicture}}}
    
    \subfloat[None]{\resizebox{0.25\textwidth}{!}{\begin{tikzpicture}
        \begin{axis}[ybar, ymin=0, xlabel={MARS Distance}, ylabel={Number of Layers}]
            \addplot +[hist={bins=15, data min=0, data max=30}] table [y index=0, col sep=comma]{data/pets-enb0-none.csv};
        \end{axis}
    \end{tikzpicture}}}
    \subfloat[MARS-PGM]{\resizebox{0.25\textwidth}{!}{\begin{tikzpicture}
        \begin{axis}[ybar, ymin=0, ymax=60, xlabel={MARS Distance}, ylabel={Number of Layers}]
            \addplot +[hist={bins=15, data min=0, data max=30}] table [y index=0, col sep=comma]{data/pets-enb0-infop-constraint.csv};
            \draw [color=black, dashed] (5.93, 0) -- (5.93, 60);
        \end{axis}
    \end{tikzpicture}}}
    \subfloat[MARS-SP]{\resizebox{0.25\textwidth}{!}{\begin{tikzpicture}
        \begin{axis}[ybar, ymin=0, xlabel={MARS Distance}, ylabel={Number of Layers}]
            \addplot +[hist={bins=15, data min=0.0, data max=30}] table [y index=0, col sep=comma]{data/pets-enb0-infop-penalty.csv};
        \end{axis}
    \end{tikzpicture}}}
    \subfloat[DELTA]{\resizebox{0.25\textwidth}{!}{\begin{tikzpicture}
        \begin{axis}[ybar, ymin=0, xlabel={MARS Distance}, ylabel={Number of Layers}]
            \addplot +[hist={bins=15, data min=0.0, data max=30}] table [y index=0, col sep=comma]{data/pets-enb0-delta.csv};
        \end{axis}
    \end{tikzpicture}}}
    \caption{Histograms of MARS distances between pre-trained and pets dataset fine-tuned weights for the specified regularisation strategies. MARS-PGM successfully constrains weight distances to be less than $\gamma_j$, indicated by the dashed line. Top: ResNet101, Bottom: EfficientNetB0.}
    \label{fig:distances-pets}
\end{figure*}
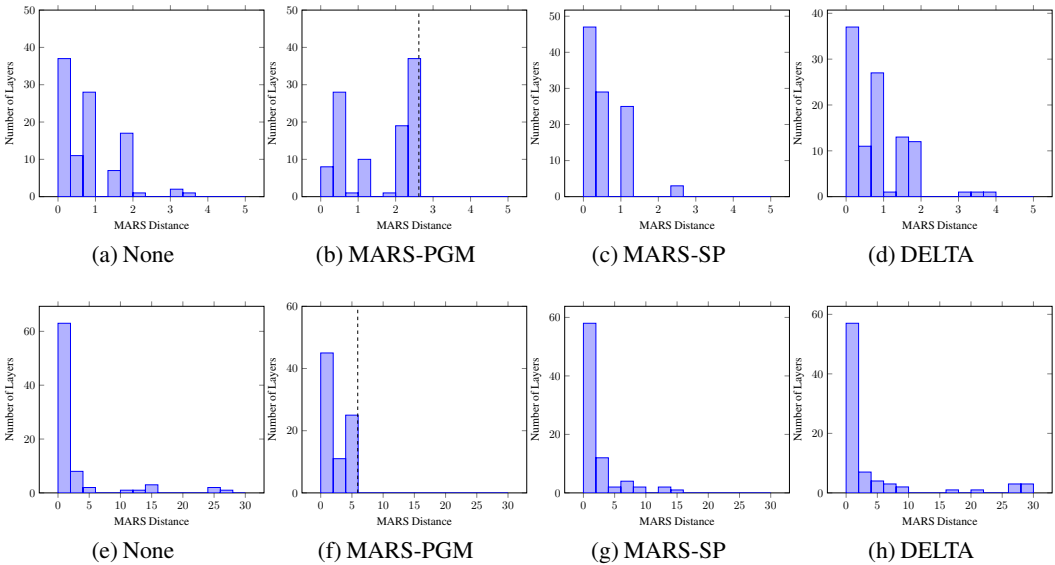

To further investigate the relationship between the penalty and projection strategies for regularisation, we analyse the distances between the pre-trained and fine-tuned weights at a per-layer basis. Figure \ref{fig:distances-pets} provides histograms indicating the distributions of per-layer MARS distances for the models without any regularisation, regularised with MARS-SP, trained with the MARS-PGM, and also DELTA. All networks were trained on the Pets dataset with the same hyperparameters used by the models examined in Section~\ref{sec:experiments-performance}. We observe three trends from these plots. Firstly, both the penalty and projection methods reduce the MARS distance between the pre-trained and fine-tuned parameters relative to the unregularised model. Secondly, a significant number of the constraints enforced by the projection method are activated---i.e., many of the weight matrices lie on the boundary of the feasible set. In contrast, the penalty-based regulariser does not enforce similarly activated constraints. Finally, the results demonstrate that the DELTA method of \citet{li2019b} does not operate by implicitly regularising the same quantity, as its MARS distance histogram is longer tailed than the others.

\subsection{Capacity Control}
To demonstrate the ability of the distance-based regularisation methods to control model capacity, we sweep through a range of hyperparameter values and plot the corresponding predictive performance. Hyperparameters were generated according to $\lambda_j = c\hat{\lambda}_j$ and $\gamma_j = c\hat{\gamma}_j$, where $c$ is varied, and $\hat{\lambda}_j$, $\hat{\gamma}_j$ are the values found during the hyperparameter optimisation process. Plots of $c$ versus the resulting accuracy on the pets dataset are given in Figure~\ref{fig:sensitivity} for both the ResNet101 and EfficientNetB0 architectures. We can see that the PGM methods behave as the theoretical analysis predicts: hyperparameter configurations that lead to very small distances between pre-trained and fine-tuned weights result in underfitting, relaxing the hyperparameters too much leads to overfitting, and using the optimised hyperparameters (i.e., when $c=1$) achieves the best performance.

\begin{figure*}[t]
    \centering
    \subfloat[ResNet101]{\resizebox{0.5\textwidth}{!}{
        \begin{tikzpicture}
            \begin{axis}[xlabel=$c$, ylabel=Accuracy, xmode=log, legend pos=south east, legend style={fill=none, draw=none, font=\tiny}, width=0.5\textwidth, height=5cm]
                \addplot[color=blue, dashed] coordinates {
                    (0.000100, 83.713943)
                    (0.001000, 84.314907)
                    (0.010000, 85.216343)
                    (0.100000, 86.508411)
                    (1.0, 88.11)
                    (10.000000, 83.443511)
                    (100.000000, 77.283657)
                    (1000.000000, 32.271636)
                    (10000.000000, 9.044471)
                }; 
                
                \addplot[color=violet, dashed] coordinates {
                    (0.000100, 83.623797)
                    (0.001000, 84.765625)
                    (0.010000, 83.864182)
                    (0.100000, 83.173078)
                    (1.0, 86.23)
                    (10.000000, 86.027646)
                    (100.000000, 84.134614)
                    (1000.000000, 74.068511)
                    (10000.000000, 56.490386)
                }; 
                
                \addplot[color=violet] coordinates {
                    (0.000100, 3.395433)
                    (0.001000, 29.957932)
                    (0.010000, 47.956732)
                    (0.100000, 85.757214)
                    (1.0, 90.47)
                    (10.000000, 84.585339)
                    (100.000000, 84.465146)
                    (1000.000000, 84.314907)
                    (10000.000000, 83.323318)
                }; 
            
                \addplot[color=orange, dashed] coordinates {
                    (0.000100, 83.082932)
                    (0.001000, 83.503604)
                    (0.010000, 84.344953)
                    (0.100000, 82.602161)
                    (1.0, 84.22)
                    (10.000000, 86.087739)
                    (100.000000, 85.907453)
                    (1000.000000, 85.997593)
                    (10000.000000, 81.700718)
                }; 
            
                \addplot[color=orange] coordinates {
                    (0.000100, 2.554087)
                    (0.001000, 15.114182)
                    (0.010000, 2.313702)
                    (0.100000, 71.213943)
                    (1.0, 89.23)
                    (10.000000, 87.169468)
                    (100.000000, 84.164661)
                    (1000.000000, 84.134614)
                    (10000.000000, 84.044468)
                }; 
                \legend{DELTA, $\ell^2$-SP, $\ell^2$-PGM, MARS-SP, MARS-PGM}
            \end{axis}
        \end{tikzpicture}}
    }
    \subfloat[EfficientNetB0]{\resizebox{0.5\textwidth}{!}{
        \begin{tikzpicture}
            \begin{axis}[xlabel=$c$, ylabel=Accuracy, xmode=log, legend pos=south east, legend style={fill=none, draw=none, font=\tiny}, width=0.5\textwidth, height=5cm]
                \addplot[color=blue, dashed] coordinates {
                    (0.000100, 84.675479)
                    (0.001000, 85.096157)
                    (0.010000, 85.697114)
                    (0.100000, 88.401443)
                    (1.0, 89.27)
                    (10.000000, 88.161057)
                    (100.000000, 13.792068)
                    (1000.000000, 4.657452)
                    (10000.000000, 2.704327)
                }; 
                
                \addplot[color=violet, dashed] coordinates {
                    (0.000100, 85.456729)
                    (0.001000, 84.885818)
                    (0.010000, 85.186297)
                    (0.100000, 86.868989)
                    (1.0, 89.79)
                    (10.000000, 89.843750)
                    (100.000000, 85.877407)
                    (1000.000000, 57.962739)
                    (10000.000000, 29.447114)
                }; 
            
                \addplot[color=violet] coordinates {
                    (0.000100, 2.133413)
                    (0.001000, 2.764423)
                    (0.010000, 3.155048)
                    (0.100000, 54.507214)
                    (1.0, 89.33)
                    (10.000000, 84.855771)
                    (100.000000, 84.525239)
                    (1000.000000, 84.795672)
                    (10000.000000, 84.765625)
                }; 
                
                \addplot[color=orange, dashed] coordinates {
                    (0.000100, 84.975964)
                    (0.001000, 85.156250)
                    (0.010000, 85.186297)
                    (0.100000, 88.671875)
                    (1.0, 89.24)
                    (10.000000, 87.770432)
                    (100.000000, 44.471154)
                    (1000.000000, 6.310096)
                    (10000.000000, 2.584135)
                }; 
                
                \addplot[color=orange] coordinates {
                    (0.000100, 2.073317)
                    (0.001000, 2.133413)
                    (0.010000, 3.425481)
                    (0.100000, 22.596154)
                    (1.0, 88.42)
                    (10.000000, 87.950718)
                    (100.000000, 85.156250)
                    (1000.000000, 85.096157)
                    (10000.000000, 85.426682)
                }; 
                \legend{DELTA, $\ell^2$-SP, $\ell^2$-PGM, MARS-SP, MARS-PGM}
            \end{axis}
        \end{tikzpicture}}
    }
    \caption{Sensitivity of the regularisation methods to the choice of hyperparameters. Measurements are taken on the pets dataset, and $c$ is a factor applied to the hyperparameters found during tuning.}
    \label{fig:sensitivity}
\end{figure*}
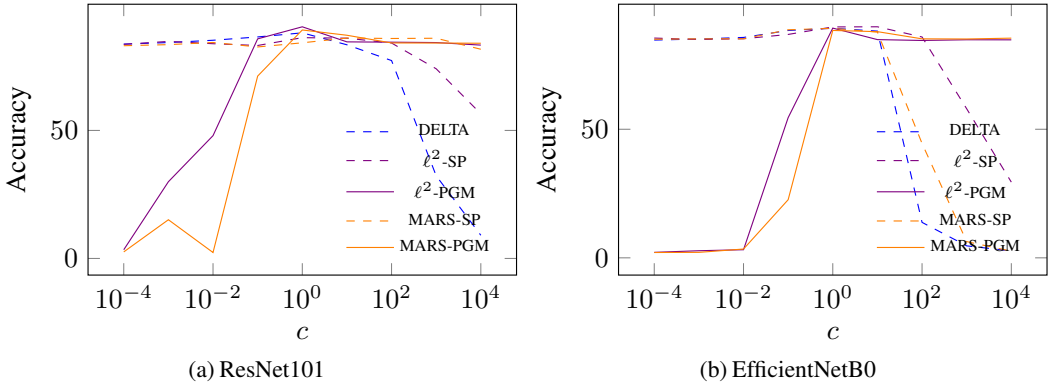

\subsection{Empirical Comparison of Bounds}
We perform an empirical comparison of our two bounds, along with a bound based on the spectral norm~\citep{long2019}, to demonstrate the relative tightness. This is done by training neural networks on the MNIST dataset~\citep{lecun1998}. We use the architecture of~\citet{scherer2010}, which consists of a single convolutional layer with $9\times9$ filters and 112 channels, followed by $5\times5$ max pooling, and finally a linear classifier layer. The network is trained for 15 epochs using the Adam optimiser~\citep{kingma2015}, as training any further does not result in any performance increase. Following similar work that has performed empirical evaluation of neural network bounds, we evaluate the bounds by computing the relevant norms and distances of the trained network weights. These quantities are then used in place of the upper bounds (i.e., $B_i^\ast$ and $D_i^\ast$) used in the definition of the hypothesis classes considered by Theorems \ref{thm:inf-rademacher} and \ref{thm:frobenius-rademacher}. In keeping with previous work that has performed this type of empirical comparison, we measure the distance from random initialisation, rather than fine-tuning from a pre-trained network~\citep{bartlett2017,neyshabur2019}. Figure \ref{fig:empirical-bound} shows how these three quantities vary for different choices of $\gamma$ (the same hyperparameter is used for both layers) when using MARS-PGM and $\ell^2$-PGM, and also how they differ through the process of training and unregularised model. We observe several trends: (i) like previous work in this area, all of the empirically evaluated Rademacher complexity bounds are still too loose to be useful for model selection and providing performance guarantees on the expected test set performance; (ii) the bounds based on the MARS norm are consistently tighter (by orders of magnitude) than the bounds based on the Frobenius and spectral norms; (iii) the empirical measurements of model complexity plateau even though the hyperparameters governing the worst-case capacity continue to increase---we suspect this is caused by implicit regularisation from early stopping. This implicit regularisation from early stopping also provides an explanation for why we observe only a small degradation in performance in Figure~\ref{fig:sensitivity} when the hyperparameters are set to very large values.

\begin{figure}[t]
    \centering
    \subfloat[$\ell^2$-PGM]{\resizebox{0.25\textwidth}{!}{
        \begin{tikzpicture}
            \begin{axis}[xlabel=$\gamma$, ylabel={Measure}, ymode=log, legend style={fill=none, draw=none}, legend pos={north west}]
                \addplot [color=violet] coordinates {
                    (1, 6094.072)
                    (2, 28624.654)
                    (3, 79211.58)
                    (4, 176144.58)
                    (5, 347963.56)
                    (6, 625274.2)
                    (7, 1028953.56)
                    (8, 1244489.9)
                };
                
                \addplot [color=orange] coordinates {
                    (1, 72.87843)
                    (2, 167.89955)
                    (3, 269.2271)
                    (4, 441.04507)
                    (5, 664.8763)
                    (6, 980.52606)
                    (7, 1382.536)
                    (8, 1323.9023)
                }; 
                
                \addplot [color=blue] coordinates {
                    (1, 567.64105)
                    (2, 2374.8264)
                    (3, 5054.0894)
                    (4, 9289.74)
                    (5, 15708.868)
                    (6, 23037.86)
                    (7, 35834.766)
                    (8, 35091.844)
                };
                \legend{Frobenius, MARS, Spectral}
            \end{axis}
        \end{tikzpicture}
    }}
    \subfloat[MARS-PGM]{\resizebox{0.25\textwidth}{!}{
        \begin{tikzpicture}
            \begin{axis}[xlabel=$\gamma$, ylabel={Measure}, ymode=log, legend style={fill=none, draw=none, font=\tiny}]
                \addplot [color=violet] coordinates {
                    (4.0, 179297.23)
                    (8.0, 783439.5)
                    (12.0, 1247777.1)
                    (16.0, 1349089.6)
                    (20.0, 1258600.1)
                    (24.0, 1219775.4)
                    (28.0, 1093704.0)
                    (32.0, 1095158.5)
                }; 
                
                \addplot [color=orange] coordinates {
                    (4.0, 205.05481)
                    (8.0, 450.4174)
                    (12.0, 721.29974)
                    (16.0, 977.4303)
                    (20.0, 1148.712)
                    (24.0, 1200.9359)
                    (28.0, 1134.5944)
                    (32.0, 1198.821)
                }; 
                
                \addplot [color=blue] coordinates {
                    (4.0, 8348.533)
                    (8.0, 34821.152)
                    (12.0, 60857.85)
                    (16.0, 67657.69)
                    (20.0, 57789.848)
                    (24.0, 56431.867)
                    (28.0, 49564.293)
                    (32.0, 48025.555)
                }; 
            \end{axis}
        \end{tikzpicture}
    }}
    \subfloat[Unregularised]{\resizebox{0.25\textwidth}{!}{
        \begin{tikzpicture}
            \begin{axis}[xlabel=Epoch, ylabel={Measure}, ymode=log, legend style={fill=none, draw=none, font=\tiny}]
                \addplot [color=violet] table [x index=0, y index=7, col sep=comma] {data/bound-train.csv}; 
                \addplot [color=orange] table [x index=0, y index=6, col sep=comma] {data/bound-train.csv}; 
                \addplot [color=blue] table [x index=0, y index=8, col sep=comma] {data/bound-train.csv}; 
            \end{axis}
        \end{tikzpicture}
    }}
    \subfloat[Unregularised Accuracy]{\resizebox{0.25\textwidth}{!}{
        \begin{tikzpicture}
            \begin{axis}[xlabel=Epoch, ylabel={Accuracy}, legend style={fill=none, draw=none}, legend pos={north west}]
                \addplot [color=black] table [x index=0, y index=4, col sep=comma] {data/bound-train.csv}; 
                \addplot [color=red] table [x index=0, y index=5, col sep=comma] {data/bound-train.csv}; 
                \legend{Train, Test}
            \end{axis}
        \end{tikzpicture}
    }}
    \caption{An empirical comparison of the tightness of our bounds on the Rademacher complexity, and that of \citet{long2019}. The (a) and (b) plots demonstrate the empirical value of the bounds as a function of the regularisation parameters: the vertical axes correspond to the model complexities for each of the three measures considered, and the horizontal axes represents the regularisation strength for $\ell^2$-PGM (a) and MARS-PGM (b). The plot in (c) shows how the model complexity measures change throughout the training of an unregularised model, and (d) shows the training and test set performance throughout training for the same unregularised model.}
    \label{fig:empirical-bound}
\end{figure}
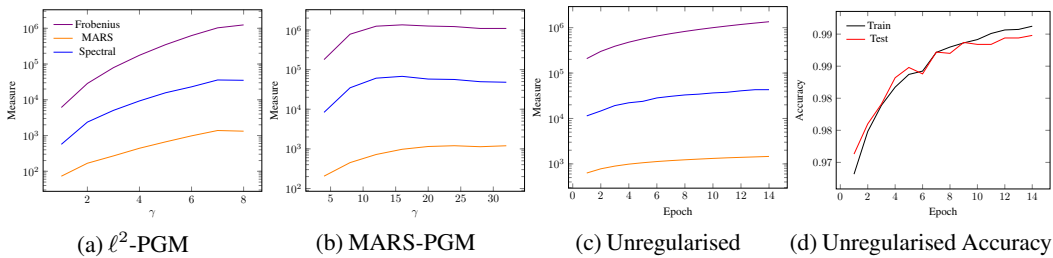
\section{Conclusion}
This paper investigates different regularisation methods for fine-tuning deep learning networks. To facilitate this, we provide two new bounds on the generalisation performance of neural networks based on the distance of the final weights from their initial values. The discussion comparing these bounds suggests that the MARS distance is a more appropriate metric in the parameter space of convolutional networks than Frobenius distance. Additionally, several new algorithms are presented that enable an experimental comparison between different regularisation strategies. The empirical results corroborate our theoretical investigation, demonstrating that constraining MARS distance is more effective than constraining Euclidean distance. Crucially, we also show that, in line with our theoretical results, enforcing a hard constraint throughout the entire training process on the distances the parameters can move is far more effective than the widely used strategy of adding a penalty term to the objective function. Implementations of the methods used in this paper are available online.\footnote{\url{https://github.com/henrygouk/mars-finetuning}}

\section*{Acknowledgments}
This work was supported by the Engineering and Physical Sciences Research Council (EPSRC) Grant number EP/S000631/1; and the MOD University Defence Research Collaboration (UDRC) in Signal Processing.

\bibliography{references}

\begin{thebibliography}{35}
\providecommand{\natexlab}[1]{#1}
\providecommand{\url}[1]{\texttt{#1}}
\expandafter\ifx\csname urlstyle\endcsname\relax
  \providecommand{\doi}[1]{doi: #1}\else
  \providecommand{\doi}{doi: \begingroup \urlstyle{rm}\Url}\fi

\bibitem[Bartlett and Mendelson(2002)]{bartlett2001}
Peter~L Bartlett and Shahar Mendelson.
\newblock Rademacher and gaussian complexities: Risk bounds and structural
  results.
\newblock \emph{Journal of Machine Learning Research}, 3:\penalty0 463--482,
  2002.

\bibitem[Bartlett et~al.(2017)Bartlett, Foster, and Telgarsky]{bartlett2017}
Peter~L Bartlett, Dylan~J Foster, and Matus~J Telgarsky.
\newblock Spectrally-normalized margin bounds for neural networks.
\newblock In \emph{Advances in {{Neural Information Processing Systems}} 30},
  pages 6240--6249, 2017.

\bibitem[Bergstra et~al.(2015)Bergstra, Komer, Eliasmith, Yamins, and
  Cox]{bergstra2015}
James Bergstra, Brent Komer, Chris Eliasmith, Dan Yamins, and David~D. Cox.
\newblock Hyperopt: A {{Python}} library for model selection and hyperparameter
  optimization.
\newblock \emph{Computational Science \& Discovery}, 8\penalty0 (1):\penalty0
  014008, July 2015.

\bibitem[Chen et~al.(2018)Chen, Wu, Gao, Dong, Luo, and Lin]{chen2018a}
Tianshui Chen, Wenxi Wu, Yuefang Gao, Le~Dong, Xiaonan Luo, and Liang Lin.
\newblock Fine-grained representation learning and recognition by exploiting
  hierarchical semantic embedding.
\newblock In \emph{Proceedings of the 26th ACM {{International Conference on
  Multimedia}}}, pages 2023--2031, 2018.

\bibitem[Cimpoi et~al.(2014)Cimpoi, Maji, Kokkinos, Mohamed, , and
  Vedaldi]{cimpoi2014}
M.~Cimpoi, S.~Maji, I.~Kokkinos, S.~Mohamed, , and A.~Vedaldi.
\newblock Describing textures in the wild.
\newblock In \emph{Proceedings of the {IEEE} Conf. on Computer Vision and
  Pattern Recognition ({CVPR})}, 2014.

\bibitem[Dem{\v{s}}ar(2006)]{demsar2006}
Janez Dem{\v{s}}ar.
\newblock Statistical comparisons of classifiers over multiple data sets.
\newblock \emph{Journal of Machine learning research}, 7\penalty0
  (Jan):\penalty0 1--30, 2006.

\bibitem[Denevi et~al.(2018)Denevi, Ciliberto, Stamos, and Pontil]{denevi2018}
Giulia Denevi, Carlo Ciliberto, Dimitris Stamos, and Massimiliano Pontil.
\newblock Learning to learn around a common mean.
\newblock In \emph{Advances in Neural Information Processing Systems 31}, 2018.

\bibitem[Denevi et~al.(2019)Denevi, Ciliberto, Grazzi, and Pontil]{denevi2019}
Giulia Denevi, Carlo Ciliberto, Riccardo Grazzi, and Massimiliano Pontil.
\newblock Learning-to-learn stochastic gradient descent with biased
  regularization.
\newblock In \emph{Proceedings of the 36th {{International Conference on
  Machine Learning}}}, 2019.

\bibitem[Duchi et~al.(2008)Duchi, Shalev-Shwartz, Singer, and
  Chandra]{duchi2008}
John Duchi, Shai Shalev-Shwartz, Yoram Singer, and Tushar Chandra.
\newblock Efficient projections onto the l 1-ball for learning in high
  dimensions.
\newblock In \emph{Proceedings of the 25th {{International Conference on
  Machine Learning}}}, 2008.

\bibitem[Finn et~al.(2017)Finn, Abbeel, and Levine]{finn2017}
Chelsea Finn, Pieter Abbeel, and Sergey Levine.
\newblock Model-agnostic meta-learning for fast adaptation of deep networks.
\newblock In \emph{Proceedings of the 34th {{International Conference on
  Machine Learning}}}, 2017.

\bibitem[Golowich et~al.(2018)Golowich, Rakhlin, and Shamir]{golowich2018}
Noah Golowich, Alexander Rakhlin, and Ohad Shamir.
\newblock Size-{{Independent Sample Complexity}} of {{Neural Networks}}.
\newblock In \emph{Conference {{On Learning Theory}}}, pages 297--299, July
  2018.

\bibitem[Griffin et~al.(2007)Griffin, Holub, and Perona]{griffin2007}
Gregory Griffin, Alex Holub, and Pietro Perona.
\newblock Caltech-256 object category dataset.
\newblock 2007.

\bibitem[He et~al.(2016)He, Zhang, Ren, and Sun]{he2016}
Kaiming He, Xiangyu Zhang, Shaoqing Ren, and Jian Sun.
\newblock Deep residual learning for image recognition.
\newblock In \emph{Proceedings of the {{IEEE}} Conference on Computer Vision
  and Pattern Recognition}, 2016.

\bibitem[Kingma and Ba(2015)]{kingma2015}
Diederik Kingma and Jimmy Ba.
\newblock Adam: {{A}} method for stochastic optimization.
\newblock In \emph{{{International Conference on Learning Representations}}},
  2015.

\bibitem[LeCun et~al.(1998)LeCun, Bottou, Bengio, and Haffner]{lecun1998}
Yann LeCun, L{\'e}on Bottou, Yoshua Bengio, and Patrick Haffner.
\newblock Gradient-based learning applied to document recognition.
\newblock \emph{Proceedings of the IEEE}, 86\penalty0 (11):\penalty0
  2278--2324, 1998.

\bibitem[Ledoux and Talagrand(1991)]{ledoux1991}
Michel Ledoux and Michel Talagrand.
\newblock \emph{Probability in Banach Spaces Isoperimetry and Processes}.
\newblock Springer, 1991.

\bibitem[Li et~al.(2019)Li, Xiong, Wang, Rao, Liu, and Huan]{li2019b}
Xingjian Li, Haoyi Xiong, Hanchao Wang, Yuxuan Rao, Liping Liu, and Jun Huan.
\newblock {{DELTA: DEep Learning Transfer using Feature Map with Attention for
  Convolutional Networks}}.
\newblock In \emph{{{International Conference on Learning Representations}}},
  2019.

\bibitem[Li et~al.(2018)Li, Grandvalet, and Davoine]{li2018a}
Xuhong Li, Yves Grandvalet, and Franck Davoine.
\newblock Explicit inductive bias for transfer learning with convolutional
  networks.
\newblock In \emph{Proceedings of the 35th {{International Conference on
  Machine Learning}}}, 2018.

\bibitem[Long and Sedghi(2019)]{long2019}
Philip~M Long and Hanie Sedghi.
\newblock Size-free generalization bounds for convolutional neural networks.
\newblock \emph{arXiv preprint arXiv:1905.12600}, 2019.

\bibitem[Maji et~al.(2013)Maji, Kannala, Rahtu, Blaschko, Vedaldi, Maji, Rahtu,
  Kannala, Blaschko, and Vedaldi]{maji2013}
S.~Maji, J.~Kannala, E.~Rahtu, M.~Blaschko, A.~Vedaldi, Subhransu Maji, Esa
  Rahtu, Juho Kannala, Matthew Blaschko, and Andrea Vedaldi.
\newblock Fine-grained visual classification of aircraft.
\newblock Technical report, 2013.

\bibitem[Maurer(2016)]{maurer2016}
Andreas Maurer.
\newblock A vector-contraction inequality for rademacher complexities.
\newblock In \emph{{{International Conference on Algorithmic Learning
  Theory}}}, pages 3--17. Springer, 2016.

\bibitem[M{\"u}ller et~al.(2019)M{\"u}ller, Kornblith, and Hinton]{muller2019}
Rafael M{\"u}ller, Simon Kornblith, and Geoffrey~E Hinton.
\newblock When does label smoothing help?
\newblock In \emph{Advances in Neural Information Processing Systems 32}, 2019.

\bibitem[Neyshabur et~al.(2015)Neyshabur, Tomioka, and Srebro]{neyshabur2015}
Behnam Neyshabur, Ryota Tomioka, and Nathan Srebro.
\newblock Norm-based capacity control in neural networks.
\newblock In \emph{Conference on Learning Theory}, 2015.

\bibitem[Neyshabur et~al.(2018)Neyshabur, Bhojanapalli, and
  Srebro]{neyshabur2018}
Behnam Neyshabur, Srinadh Bhojanapalli, and Nathan Srebro.
\newblock A {{PAC}}-{{Bayesian Approach}} to {{Spectrally}}-{{Normalized Margin
  Bounds}} for {{Neural Networks}}.
\newblock In \emph{International {{Conference}} on {{Learning
  Representations}}}, February 2018.

\bibitem[Neyshabur et~al.(2019)Neyshabur, Li, Bhojanapalli, LeCun, and
  Srebro]{neyshabur2019}
Behnam Neyshabur, Zhiyuan Li, Srinadh Bhojanapalli, Yann LeCun, and Nathan
  Srebro.
\newblock Towards understanding the role of over-parametrization in
  generalization of neural networks.
\newblock In \emph{{{International Conference on Learning Representations}}},
  2019.

\bibitem[Nilsback and Zisserman(2008)]{nilsback2008}
Maria-Elena Nilsback and Andrew Zisserman.
\newblock Automated flower classification over a large number of classes.
\newblock In \emph{Proceedings of the 6th Indian Conference on Computer Vision,
  Graphics \& Image Processing}, pages 722--729. IEEE, 2008.

\bibitem[Oneto et~al.(2016)Oneto, Ridella, and Anguita]{oneto2016}
Luca Oneto, Sandro Ridella, and Davide Anguita.
\newblock Tikhonov, ivanov and morozov regularization for support vector
  machine learning.
\newblock \emph{Machine Learning}, 103\penalty0 (1):\penalty0 103--136, 2016.

\bibitem[Parkhi et~al.(2012)Parkhi, Vedaldi, Zisserman, and
  Jawahar]{parkhi2012}
Omkar~M Parkhi, Andrea Vedaldi, Andrew Zisserman, and CV~Jawahar.
\newblock Cats and dogs.
\newblock In \emph{Proceedings of the IEEE Conference on Computer Vision and
  Pattern Recognition}, 2012.

\bibitem[Pinto et~al.(2011)Pinto, Stone, Zickler, and Cox]{pinto2011}
Nicolas Pinto, Zak Stone, Todd Zickler, and David Cox.
\newblock Scaling up biologically-inspired computer vision: A case study in
  unconstrained face recognition on facebook.
\newblock In \emph{IEEE CVPR Workshops}, pages 35--42. IEEE, 2011.

\bibitem[Reddi et~al.(2018)Reddi, Kale, and Kumar]{reddi2018}
Sashank~J Reddi, Satyen Kale, and Sanjiv Kumar.
\newblock On the convergence of {{Adam}} and beyond.
\newblock In \emph{{{International Conference on Learning Representations}}},
  2018.

\bibitem[Russakovsky et~al.(2015)Russakovsky, Deng, Su, Krause, Satheesh, Ma,
  Huang, Karpathy, Khosla, Bernstein, Berg, and {Fei-Fei}]{russakovsky2015}
Olga Russakovsky, Jia Deng, Hao Su, Jonathan Krause, Sanjeev Satheesh, Sean Ma,
  Zhiheng Huang, Andrej Karpathy, Aditya Khosla, Michael Bernstein,
  Alexander~C. Berg, and Li~{Fei-Fei}.
\newblock {{ImageNet Large Scale Visual Recognition Challenge}}.
\newblock \emph{International Journal of Computer Vision}, 115\penalty0
  (3):\penalty0 211--252, December 2015.

\bibitem[Scherer et~al.(2010)Scherer, M{\"u}ller, and Behnke]{scherer2010}
Dominik Scherer, Andreas M{\"u}ller, and Sven Behnke.
\newblock Evaluation of pooling operations in convolutional architectures for
  object recognition.
\newblock In \emph{International Conference on Artificial Neural Networks},
  pages 92--101. Springer, 2010.

\bibitem[Shalev-Shwartz and Ben-David(2014)]{shalevshwartz2014}
Shai Shalev-Shwartz and Shai Ben-David.
\newblock \emph{Understanding Machine Learning: From Theory to Algorithms}.
\newblock Cambridge University Press, 2014.

\bibitem[Tan and Le(2019)]{tan2019}
Mingxing Tan and Quoc Le.
\newblock Efficientnet: Rethinking model scaling for convolutional neural
  networks.
\newblock In \emph{Proceedings of the 36th {{International Conference on
  Machine Learning}}}, 2019.

\bibitem[Yang et~al.(2007)Yang, Yan, and Hauptmann]{yang2007}
Jun Yang, Rong Yan, and Alexander~G Hauptmann.
\newblock Adapting svm classifiers to data with shifted distributions.
\newblock In \emph{Proceedings of the 7th IEEE International Conference on Data
  Mining Workshops}, pages 69--76. IEEE, 2007.

\end{thebibliography}
\bibliographystyle{plainnat}

\newpage
\appendix

\section{Proof of Theorem~\ref{thm:inf-rademacher}}
\label{sec:proof-inf-rademacher}

We begin with the definition of empirical Rademacher complexity.
\begin{definition}
The empirical Rademacher complexity of a class, $\mathcal{C}$, is defined as
\begin{equation*}
    \EmpRad(\mathcal{C}) = \frac{1}{m} \mathbb{E}_{\vec \sigma} \Bigg \lbrack \sup_{h \in \mathcal{C}} \sum_{i=1}^{m} \vec \sigma_i h(\vec x_i) \Bigg \rbrack,
\end{equation*}
where $\vec \sigma$ is a vector of independent Rademacher distributed random variables and $m$ is the number of training examples.
\end{definition}

Using a standard result from statistical learning theory, one can make use of the empirical Rademacher complexity of $\mathcal{F}_\infty$ to bound the generalisation error of models contained within the class.

\begin{theorem}[{\cite[p. 378]{shalevshwartz2014}}]
\label{thm:rademacher-generalisation}
For a class, $\mathcal{C}$, containing functions mapping from some set to the interval $\lbrack a, b\rbrack$, for all $\delta \in (0, 1)$ the following holds with probability at least $1-\delta$:
\begin{equation*}
    \underset{\vec x, y}{\Exp} \Big \lbrack h(y, \vec x) \Big \rbrack \leq \frac{1}{m} \sum_{i=1}^m h(y_i, \vec x_i) + 2\EmpRad(\mathcal{C}) + 3(b - a) \sqrt{\frac{\textup{log}(2/\delta)}{2m}},
\end{equation*}
where $(\vec x_i, y_i)$ are independent and identically distributed.
\end{theorem}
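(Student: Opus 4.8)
The plan is to prove this as a standard two-sided application of the bounded differences (McDiarmid) inequality together with a symmetrisation argument; the coefficient $3$ on the last term appears precisely because the bound is phrased in terms of the \emph{empirical} Rademacher complexity $\EmpRad(\mathcal{C})$ rather than its expectation. Writing $z_i = (y_i, \vec x_i)$ for each labelled example, I would first introduce the worst-case deviation functional
\[
    \Phi(z_1, \ldots, z_m) = \sup_{h \in \mathcal{C}} \left( \Exp_{z}[h(z)] - \Mean{i}{m} h(z_i) \right).
\]
Since every $h \in \mathcal{C}$ takes values in $[a, b]$, replacing a single $z_i$ by an independent copy alters the corresponding summand $\frac{1}{m} h(z_i)$ by at most $(b-a)/m$, and because a supremum of quantities that each move by at most $(b-a)/m$ itself moves by at most $(b-a)/m$, the functional $\Phi$ satisfies the bounded differences property with constants $c_i = (b-a)/m$.

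Next I would apply McDiarmid's inequality to $\Phi$: with $\sum_i c_i^2 = (b-a)^2/m$, a deviation of $t = (b-a)\sqrt{\log(2/\delta)/(2m)}$ has probability at most $\delta/2$, so with probability at least $1 - \delta/2$,
\[
    \Phi(z_1, \ldots, z_m) \leq \Exp[\Phi] + (b-a)\sqrt{\frac{\log(2/\delta)}{2m}}.
\]
To control $\Exp[\Phi]$ I would symmetrise by introducing a ghost sample $z_1', \ldots, z_m'$ drawn i.i.d.\ from the same distribution and independent of the original one, using $\Exp_z[h(z)] = \Exp_{z'}[\Mean{i}{m} h(z_i')]$ to rewrite the population term. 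Pulling the ghost expectation outside the supremum by Jensen's inequality, and then inserting Rademacher signs $\sigma_i$ that swap $z_i$ with $z_i'$ (valid because $h(z_i') - h(z_i)$ is symmetric under this swap), yields
\[
    \Exp[\Phi] \leq \frac{2}{m} \ExpSigma \Exp_z \left[ \sup_{h \in \mathcal{C}} \sum_{i=1}^m \sigma_i h(z_i) \right] = 2 \Rad(\mathcal{C}),
\]
where $\Rad(\mathcal{C}) = \Exp_z[\EmpRad(\mathcal{C})]$ is the expected Rademacher complexity.

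Finally, to replace $\Rad(\mathcal{C})$ by the empirical quantity $\EmpRad(\mathcal{C})$ appearing in the statement, I would view $\EmpRad(\mathcal{C})$ as a function of $z_1, \ldots, z_m$; it again has bounded differences $(b-a)/m$, so a second application of McDiarmid gives, with probability at least $1 - \delta/2$,
\[
    \Rad(\mathcal{C}) \leq \EmpRad(\mathcal{C}) + (b-a)\sqrt{\frac{\log(2/\delta)}{2m}}.
\]
A union bound over the two failure events then chains the estimates: the first McDiarmid deviation contributes one copy of $(b-a)\sqrt{\log(2/\delta)/(2m)}$, while the factor $2$ multiplying $\Rad(\mathcal{C})$ scales the second deviation, contributing two further copies, so that $\Phi \leq 2\EmpRad(\mathcal{C}) + 3(b-a)\sqrt{\log(2/\delta)/(2m)}$. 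Since $\Phi \geq \Exp_z[h(z)] - \Mean{i}{m} h(z_i)$ for every individual $h \in \mathcal{C}$, rearranging gives exactly the claimed inequality.

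I expect the symmetrisation step to be the main obstacle, as it requires justifying the interchange of the ghost-sample expectation with the supremum and verifying that inserting the Rademacher signs leaves the distribution unchanged. A secondary subtlety is bookkeeping the constants so that the coefficient is $3$ rather than $4$: this relies on recognising that only the \emph{second} McDiarmid deviation is amplified by the symmetrisation factor $2$, while the first enters unscaled.
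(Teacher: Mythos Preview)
Your proposal is correct and follows the standard McDiarmid-plus-symmetrisation argument that appears in the cited reference. Note, however, that the paper does not actually prove this theorem: it is stated with a citation to \citet[p.~378]{shalevshwartz2014} and invoked as a black box in the proofs of Theorems~\ref{thm:inf-rademacher} and~\ref{thm:frobenius-rademacher}, so there is no paper-side proof to compare against beyond the textbook one you have reproduced.
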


We will make use of the following lemma, which uses the ``peeling'' method of~\citet{neyshabur2015} to bound the Rademacher complexity of a network based on the product of weight matrix norms.

\begin{lemma}
\label{lem:inf-peeling}
For $\mathcal{F}_\infty$ as defined in Section~\ref{sec:theory}, the following inequality holds
\begin{equation*}
    \ExpSigma \Lbrack \sup_{W_{1:k}} \Pipes \sum_{i=1}^m \sigma_{i} f_k(\vec x_i) \Pipes_{\infty} \Rbrack \leq \sqrt{2m \textup{log}(2d)} C_{\infty} \prod_{j=1}^k 2B_j^{\infty}
\end{equation*}
where $f_k = \phi_k \circ ... \circ \phi_1$, $\vec x_i \in \mathbb{R}^d$, and $\|\vec x_i\|_\infty \leq C_\infty$.
\end{lemma}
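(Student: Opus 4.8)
The plan is to prove the lemma by induction on $k$ using the layer-wise ``peeling'' technique of \citet{neyshabur2015}: one strips off the outermost layer, reproducing the $\|\cdot\|_\infty$ quantity for the shallower network at the cost of a factor $2B_k^\infty$. Only the weight-norm constraints $\|W_j\|_\infty \le B_j^\infty$ are needed; discarding the constraints on $W_j^0$ and on $\|W_j-W_j^0\|_\infty$ can only enlarge the supremum, so it preserves the upper bound. Writing $G_k := \ExpSigma\big[\sup_{W_{1:k}} \| \sum_{i=1}^m \sigma_i f_k(\vec x_i)\|_\infty\big]$, the target is $G_k \le \sqrt{2m\log(2d)}\,C_\infty \prod_{j=1}^k 2B_j^\infty$, which I would obtain from $G_0 \le C_\infty\sqrt{2m\log(2d)}$ together with $G_k \le 2B_k^\infty\, G_{k-1}$ for each $k\ge 1$.

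For the base case, $f_0(\vec x) = \vec x$ and there is no supremum, so $G_0 = \ExpSigma \max_r \lvert \sum_{i=1}^m \sigma_i (\vec x_i)_r\rvert$. Each inner sum is a zero-mean sub-Gaussian variable with variance proxy $\sum_i (\vec x_i)_r^2 \le mC_\infty^2$, and with both signs this is a maximum of $2d$ such variables, so Massart's lemma gives $G_0 \le C_\infty\sqrt{2m\log(2d)}$.

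For the inductive step, I would write $f_k(\vec x_i) = \varphi(W_k f_{k-1}(\vec x_i))$ and use $\|\vec v\|_\infty = \max_{r,\, s\in\{\pm1\}} s\, v_r$. Since $\|W_k\|_\infty \le B_k^\infty$ is equivalent to every row $\vec w$ of $W_k$ satisfying $\|\vec w\|_1 \le B_k^\infty$ \emph{independently of the other rows}, and the outer maximum already ranges over rows, the supremum over the whole matrix $W_k$ collapses to a supremum over a single vector in the $\ell^1$-ball of radius $B_k^\infty$:
\[
    G_k = \ExpSigma\, \sup_{W_{1:k-1}}\, \sup_{\|\vec w\|_1 \le B_k^\infty}\ \Big\lvert \sum_{i=1}^m \sigma_i\, \varphi\big(\langle \vec w,\, f_{k-1}(\vec x_i)\rangle\big) \Big\rvert.
\]
Because replacing $\sigma$ by $-\sigma$ shows $\ExpSigma \sup(\cdot)$ of the inner sum equals that of its negation, and each such expectation is nonnegative (it dominates the supremum of the corresponding zero means), the absolute value can be removed at the cost of a factor $2$. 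The Ledoux--Talagrand contraction inequality (one-sided form, valid since $\varphi$ is $1$-Lipschitz) then strips off $\varphi$, and Hölder's inequality gives $\langle \vec w, \sum_i \sigma_i f_{k-1}(\vec x_i)\rangle \le \|\vec w\|_1 \|\sum_i \sigma_i f_{k-1}(\vec x_i)\|_\infty \le B_k^\infty \|\sum_i \sigma_i f_{k-1}(\vec x_i)\|_\infty$, which closes the recursion as $G_k \le 2B_k^\infty\, G_{k-1}$. Unrolling down to $G_0$ yields the stated bound.

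The part needing the most care is arranging the peeling so the recursion genuinely closes with the same $\|\cdot\|_\infty$ object one level down: this works precisely because the MARS-norm constraint \emph{decouples across the rows} of $W_k$, so $\ell^\infty/\ell^1$ duality turns a supremum over a whole weight matrix into a supremum over one unit-$\ell^1$-ball vector --- a Frobenius-norm argument would not decouple this way, which is the structural reason Theorems~\ref{thm:inf-rademacher} and~\ref{thm:frobenius-rademacher} differ. The rest is bookkeeping: getting exactly $2B_j^\infty$ per layer (the $2$ from symmetrising the absolute value, the $B_j^\infty$ from Hölder), tracking the sub-Gaussian constants in the base case, and checking that the contraction inequality still applies with the large, data-dependent index set $\{(W_{1:k-1},\vec w)\}$ --- which it does, since contraction is insensitive to the index set.
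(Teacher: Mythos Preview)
Your proposal is correct and follows essentially the same route as the paper: the paper too collapses the $\|\cdot\|_\infty$ and the supremum over $W_k$ to a supremum over a single row vector in the $\ell^1$-ball, removes $\varphi$ and the absolute value via the Ledoux--Talagrand contraction (picking up the factor $2$), applies H\"older to obtain $2B_k^\infty$ times the same quantity one layer down, iterates, and finishes with Massart's inequality for $\ExpSigma\|\sum_i\sigma_i\vec x_i\|_\infty$. Your presentation packages this as a formal induction and separates the factor $2$ (symmetrisation of the absolute value) from the one-sided contraction step, but the argument is the same.
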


\begin{proof}
\begin{align*}
    &\ExpSigma \Lbrack \sup_{W_{1:k}} \Pipes \sum_{i=1}^m \sigma_{i} \varphi(W_k f_{k-1}(\vec x_i)) \Pipes_\infty \Rbrack \\
    &=\ExpSigma \Lbrack \sup_{W_{1:k-1}, \vec w_k} \Pipe \sum_{i=1}^m \sigma_{i} \varphi(\vec w_k^T f_{k-1}(\vec x_i)) \Pipe \Rbrack \\
    &\leq 2\ExpSigma \Lbrack \sup_{W_{1:k-1}, \vec w_k} \Pipe \sum_{i=1}^m \sigma_{i} \vec w_k^T f_{k-1}(\vec x_i) \Pipe \Rbrack \\
    &\leq 2\ExpSigma \Lbrack \sup_{W_{1:k-1}, \vec w_k} \|\vec w_k\|_1 \Pipes \sum_{i=1}^m \sigma_{i} f_{k-1}(\vec x_i) \Pipes_\infty \Rbrack \\
    &=2\ExpSigma \Lbrack \sup_{W_{1:k}} \|W_k\|_\infty \Pipes \sum_{i=1}^m \sigma_{i} f_{k-1}(\vec x_i) \Pipes_\infty \Rbrack \\
    &=2 B_k^\infty \ExpSigma \Lbrack \sup_{W_{1:k-1}} \Pipes \sum_{i=1}^m \sigma_{i} f_{k-1}(\vec x_i) \Pipes_\infty \Rbrack,
\end{align*}
where the first inequality is due to the Lipschitz composition property of Rademacher complexities~\citep{ledoux1991} and the second follows from H\"older's inequality. Iterating this process for all layers of the network results in
\begin{equation*}
    \ExpSigma \Lbrack \sup_{W_{1:k}} \Pipes \sum_{i=1}^m \sigma_{i} f_k(\vec x_i) \Pipes_{\infty} \Rbrack \leq \Bigg ( \prod_{j=1}^k 2B_j^{\infty} \Bigg ) \ExpSigma \Pipes \sum_{i=1}^m \sigma_i \vec x_i \Pipes_\infty.
\end{equation*}
The expectation can be bounded via Massart's inequality~\citep[p. 383]{shalevshwartz2014} to conclude the proof.
\end{proof}

The following lemma demonstrating how to ``split'' the Rademacher complexity bound into terms associated with the fine-tuned and pre-trained weights will also be useful.

\begin{lemma}
\label{lem:inf-splitting}
For $\mathcal{F}_\infty$ as given in Section~\ref{sec:theory}, the following inequality holds
\begin{align*}
    \ExpSigma \Lbrack \sup_{W_{1:k}} \vec v^T \sum_{i=1}^m \sigma_{i} f_{k}(\vec x_i) \Rbrack &\leq \sqrt{2 m \text{log}(2d)} C_\infty \|\vec v\|_1 D_k^\infty \prod_{j=1}^{k-1} 2B_j^\infty \\
    &+ \|\vec v\|_1 \max_j \ExpSigma \Lbrack \sup_{W_{1:k-1}} \vec w_{k,j}^{0T} \sum_{i=1}^m \sigma_{i} f_{k-1}(\vec x_i) \Rbrack,
\end{align*}
where $f_k$, $\vec x_i$, and $C_\infty$ are as defined in Lemma~\ref{lem:inf-peeling}, and $\vec v$ is a fixed vector in $\mathbb{R}^n$.
\end{lemma}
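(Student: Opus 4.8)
The plan is to run one step of the same ``peeling'' recursion used in Lemma~\ref{lem:inf-peeling}, but to halt one H\"older step early so as to decompose the top weight matrix $W_k = W_k^0 + U$ into its pre-trained part and the deviation $U$, which satisfies $\|U\|_\infty \le D_k^\infty$.

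First I would unroll the last layer, writing $\vec v^T f_k(\vec x_i) = \vec v^T \varphi(W_k f_{k-1}(\vec x_i)) = \sum_\ell v_\ell\, \varphi(\vec w_{k,\ell}^T f_{k-1}(\vec x_i))$, where $\vec w_{k,\ell}$ is the $\ell$-th row of $W_k$. The MARS constraints decouple over rows (each row of $W_k$ lies in an $\ell_1$-ball of radius $D_k^\infty$ about the corresponding row of $W_k^0$ and of radius $B_k^\infty$ about the origin), so $\sup_{W_k}$ splits into a sum of row-wise suprema. For each row the scalar $v_\ell$ can be pulled out as $|v_\ell|$ by absorbing its sign into the activation, $\varphi_\ell := \mathrm{sign}(v_\ell)\,\varphi$, which is still $1$-Lipschitz; the key point here is that because the outer operation is the fixed linear functional $\vec v^T(\cdot)$ and not $\|\cdot\|_\infty$, no absolute value is introduced at this stage, so no factor of $2$ is lost.

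Second, I would push $\sup_{W_{1:k-1}}$ inside the sum over rows (which only increases the quantity), apply the Lipschitz-composition property of Rademacher averages to strip $\varphi_\ell$---replacing $\varphi_\ell(\vec w_{k,\ell}^T f_{k-1}(\vec x_i))$ by the linear term $\vec w_{k,\ell}^T \sum_i \sigma_i f_{k-1}(\vec x_i)$---and then center the row as $\vec w_{k,\ell} = \vec w_{k,\ell}^0 + \vec u$ with $\|\vec u\|_1 \le D_k^\infty$. The supremum over $\vec u$ of $\vec u^T\sum_i \sigma_i f_{k-1}(\vec x_i)$ equals $D_k^\infty \|\sum_i \sigma_i f_{k-1}(\vec x_i)\|_\infty$ (the support function of the $\ell_1$ ball), and bounding $\Exp_\sigma \sup_{W_{1:k-1}} \|\sum_i \sigma_i f_{k-1}(\vec x_i)\|_\infty$ by Lemma~\ref{lem:inf-peeling} gives the first summand $\sqrt{2m\log(2d)}\,C_\infty\,\|\vec v\|_1 D_k^\infty \prod_{j=1}^{k-1} 2B_j^\infty$. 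The leftover contribution of each row is $\Exp_\sigma \sup_{W_{1:k-1}} \vec w_{k,\ell}^{0T} \sum_i \sigma_i f_{k-1}(\vec x_i)$; since each such quantity is nonnegative (it dominates $\sup_{W_{1:k-1}} \Exp_\sigma[\,\vec w_{k,\ell}^{0T}\sum_i\sigma_i f_{k-1}(\vec x_i)\,] = 0$), we may bound $\sum_\ell |v_\ell|$ times it by $\|\vec v\|_1$ times its maximum over $\ell$, producing the second term with $\max_j$ outside the expectation, as claimed.

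The step I expect to require the most care is the nonlinearity: since $\varphi(W_k \vec z) \neq \varphi(W_k^0 \vec z) + \varphi((W_k - W_k^0)\vec z)$, the split of $W_k$ must happen \emph{after} the activation has been removed by contraction, i.e.\ at the linear level, and the deviation must be exploited through its small $\ell_1$-radius $D_k^\infty$ via H\"older / the $\ell_1$ support function---not through a contraction, whose Lipschitz constant would instead scale with $B_k^\infty$. Beyond that the work is bookkeeping: checking that every exchange of $\sup$, $\max_\ell$ and $\Exp_\sigma$ goes in the direction producing an upper bound, and that the sign of $\vec v$ is handled so that the constant in front of the first term stays at one.
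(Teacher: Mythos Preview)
Your proposal is correct and follows essentially the same route as the paper's proof: move the row-sum outside the supremum, strip the activation via the Lipschitz contraction (picking up no factor of $2$ because the outer map is the fixed linear functional $\vec v^T(\cdot)$ rather than $\|\cdot\|_\infty$), center each row as $\vec w_{k,j}^0 + \vec u$, bound the deviation term via H\"older/$\ell_1$-support-function and Lemma~\ref{lem:inf-peeling}, and bound $\sum_j |v_j|\cdot(\cdot)$ by $\|\vec v\|_1\max_j(\cdot)$ using nonnegativity of the Rademacher averages. Your observation that the MARS constraint decouples $\sup_{W_k}$ exactly across rows is a nice sharpening of what the paper states only as an inequality, but it does not change the argument.
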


\begin{proof}
\begin{align*}
    \ExpSigma \Lbrack \sup_{W_{1:k}} \vec v^T \sum_{i=1}^m \sigma_{i} f_{k}(\vec x_i) \Rbrack &= \ExpSigma \Lbrack \sup_{W_{1:k}} \vec v^T \sum_{i=1}^m \sigma_{i} \varphi(W_k f_{k-1}(\vec x_i)) \Rbrack \\
    &\leq \sum_{j=1}^n \ExpSigma \Lbrack \sup_{W_{1:k}} v_j \sum_{i=1}^m \sigma_{i} \varphi(\vec w_{k,j}^T f_{k-1}(\vec x_i)) \Rbrack \\
    &\leq \sum_{j=1}^n |v_j| \ExpSigma \Lbrack \sup_{W_{1:k}} \sum_{i=1}^m \sigma_{i} \vec w_{k,j}^T f_{k-1}(\vec x_i) \Rbrack,
\end{align*}
where the first inequality comes from moving the summation outside the supremum, and the second from multiplying the hypothesis class by a constant and applying the Lipschitz composition property of Rademacher complexities. This can be further bounded from above by
\begin{align*}
    &\sum_{j=1}^n |v_j| \ExpSigma \Lbrack \sup_{W_{1:k}} \sum_{i=1}^m \sigma_{i} \vec w_{k,j}^T f_{k-1}(\vec x_i) \Rbrack \\
    &= \sum_{j=1}^n |v_j| \ExpSigma \Lbrack \sup_{W_{1:k}} \vec w_{k,j}^T \sum_{i=1}^m \sigma_{i} f_{k-1}(\vec x_i) \Rbrack \\
    &= \sum_{j=1}^n |v_j| \ExpSigma \Lbrack \sup_{W_{1:k}} (\vec w_{k,j} - \vec w_{k,j}^0 + \vec w_{k,j}^0)^T \sum_{i=1}^m \sigma_{i} f_{k-1}(\vec x_i) \Rbrack \\
    &\leq \sum_{j=1}^n |v_j| \ExpSigma \Lbrack \sup_{W_{1:k}} (\vec w_{k,j} - \vec w_{k,j}^0)^T \sum_{i=1}^m \sigma_{i} \vec f_{k-1}(\vec x_i) \Rbrack + \sum_{j=1}^n |v_j| \ExpSigma \Lbrack \sup_{W_{1:k-1}} \vec w_{k,j}^{0T} \sum_{i=1}^m \sigma_{i} f_{k-1}(\vec x_i) \Rbrack \\
    &\leq \|\vec v\|_1 \max_j \ExpSigma \Lbrack \sup_{W_{1:k}} (\vec w_{k,j} - \vec w_{k,j}^0)^T \sum_{i=1}^m \sigma_{i} \vec f_{k-1}(\vec x_i) \Rbrack + \|\vec v\|_1 \max_j \ExpSigma \Lbrack \sup_{W_{1:k-1}} \vec w_{k,j}^{0T} \sum_{i=1}^m \sigma_{i} f_{k-1}(\vec x_i) \Rbrack \\
\end{align*}
where the first inequality comes from moving a summation outside the supremum, and the second from H\"older's inequality. Because Rademacher complexities are always non-negative, we can omit the absolute value in the definition of the vector $\infty$-norm that one would expect to be enclosing each of the expectations. The first term can be bounded from above by moving the $\max_j$ inside the expectation and then applying H\"older's inequality and the definition of the MARS norm,
\begin{equation*}
    \|\vec v\|_1 \max_j \ExpSigma \Lbrack \sup_{W_{1:k}} (\vec w_{k,j} - \vec w_{k,j}^0)^T \sum_{i=1}^m \sigma_{i} \vec f_{k-1}(\vec x_i) \Rbrack \leq \|\vec v\|_1 D_k^\infty \ExpSigma \Lbrack \sup_{W_{1:k-1}} \Pipes \sum_{i=1}^m \sigma_{i} \vec f_{k-1}(\vec x_i) \Pipes_\infty \Rbrack.
\end{equation*}
Applying Lemma~\ref{lem:inf-peeling} to the resulting expectation completes the proof.
\end{proof}

We now prove Theorem~\ref{thm:inf-rademacher}.

\begin{proof}
The main result of \citet{maurer2016} tells us that
\begin{equation*}
    \EmpRad(\mathcal{F}_\infty) \leq \frac{\sqrt{2} \rho}{m} \ExpSigma \Lbrack \sup_{W_{1:L}} \sum_{j=1}^c \sum_{i=1}^m \sigma_{i,j} (\phi_L \circ ... \circ \phi_1)_j(\vec x_i) \Rbrack,
\end{equation*}
where $(\phi_L \circ ... \circ \phi_1)_j(\vec x_i)$ is the $j$-th component of the output of $(\phi_L \circ ... \circ \phi_1)(\vec x_i)$. Denoting $f_k = \phi_k \circ ... \circ \phi_1$ and the $j$-th row of $W_L$ as $\vec w_{L,j}^T$, the summation over classes can be brought outside the Rademacher complexity and $\phi_L$ expanded as
\begin{align*}
    &\EmpRad(\mathcal{F}_\infty) \leq \sum_{j=1}^c \frac{\sqrt{2} \rho}{m} \ExpSigma \Lbrack \sup_{W_{1:L}} \sum_{i=1}^m \sigma_{i} \varphi_j(\vec w_{L,j}^{T} f_{L-1}(\vec x_i)) \Rbrack
\end{align*}
to which the Lipschitz contraction inequality can be applied to obtain
\begin{align*}
    &\sum_{j=1}^c \frac{\sqrt{2} \rho}{m} \ExpSigma \Lbrack \sup_{W_{1:L}} \sum_{i=1}^m \sigma_{i} \vec w_{L,j}^T f_{L-1}(\vec x_i) \Rbrack \\
    &= \sum_{j=1}^c \frac{\sqrt{2} \rho}{m} \ExpSigma \Lbrack \sup_{W_{1:L}} (\vec w_{L,j} - \vec w_{L,j}^0 + \vec w_{L,j}^0)^T \sum_{i=1}^m \sigma_{i} f_{L-1}(\vec x_i) \Rbrack.
\end{align*}
H\"older's inequality and the definition of the MARS norm can be used to further bound this as
\begin{align*}
    &\sum_{j=1}^c \frac{\sqrt{2} \rho}{m} \LParen \ExpSigma \Lbrack \sup_{W_{1:L}} \|\vec w_{L,j} - \vec w_{L,j}^0\|_1 \Pipes \sum_{i=1}^m \sigma_{i} f_{L-1}(\vec x_i) \Pipes_\infty \Rbrack + \ExpSigma \Lbrack \sup_{W_{1:L-1}} \vec w_{L,j}^{0T} \sum_{i=1}^m \sigma_{i} f_{L-1}(\vec x_i) \Rbrack \RParen \\
    &\leq \sum_{j=1}^c \frac{\sqrt{2} \rho}{m} \LParen \ExpSigma \Lbrack \sup_{W_{1:L}} D_L^\infty \Pipes \sum_{i=1}^m \sigma_{i} f_{L-1}(\vec x_i) \Pipes_\infty \Rbrack + \ExpSigma \Lbrack \sup_{W_{1:L-1}} \vec w_{L,j}^{0T} \sum_{i=1}^m \sigma_{i} f_{L-1}(\vec x_i) \Rbrack \RParen,
\end{align*}
of which the first expectation can be dealt with via Lemma~\ref{lem:inf-peeling}, yielding
\begin{equation*}
    \EmpRad(\mathcal{F}_\infty) \leq \frac{2\sqrt{\text{log}(2d)} c \rho C_\infty \frac{D_L^\infty}{2B_L^\infty} \prod_{i=1}^L 2B_i^\infty}{\sqrt{m}} + \sum_{j=1}^c \frac{\sqrt{2}\rho}{m} \ExpSigma \Lbrack \sup_{W_{1:L-1}} \vec w_{L,j}^{0T} \sum_{i=1}^m \sigma_{i} f_{L-1}(\vec x_i) \Rbrack.
\end{equation*}
Lemma~\ref{lem:inf-splitting} can be applied to the expectation, yielding
\begin{align*}
    & \ExpSigma \Lbrack \sup_{W_{1:L}} \vec w_{L,j}^{0T} \sum_{i=1}^m \sigma_{i} f_{L-1}(\vec x_i) \Rbrack \\
    & \leq \sqrt{2m\text{log}(2d)} C_\infty \|\vec w_{L,j}^0\|_1 D_{L-1}^\infty \prod_{i=1}^{L-2} 2B_i^\infty + \|\vec w_{L,j}^0\|_1 \max_k \ExpSigma \Lbrack \sup_{W_{1:L-2}} \vec w_{L-1,k}^{0T} \sum_{i=1}^m \sigma_i f_{L-2}(\vec x_i) \Rbrack \\
    & \leq \sqrt{2m\text{log}(2d)} C_\infty \frac{D_{L-1}^\infty}{2B_{L-1}^{\infty}} \prod_{i=1}^{L} 2B_i^\infty + B_L^\infty \max_k \ExpSigma \Lbrack \sup_{W_{1:L-2}} \vec w_{L-1,k}^{0T} \sum_{i=1}^m \sigma_i f_{L-2}(\vec x_i) \Rbrack.
\end{align*}
Noting that $\max_k \|\vec w_{l,k}^0\|_1 \leq B_l^\infty$, Lemma~\ref{lem:inf-splitting} can be recursively applied to the remaining expectation until the following bound is obtained
\begin{equation*}
    \EmpRad(\mathcal{F}_\infty) \leq \frac{2\sqrt{\textup{log}(2d)} c \rho C_\infty \sum_{j=1}^L \frac{D_j^\infty}{B_j^\infty} \prod_{i=1}^L 2B_i^\infty}{\sqrt{m}} + \sum_{j=1}^c \frac{\sqrt{2}\rho \prod_{i=2}^L B_i^\infty}{m} \max_k \ExpSigma \Lbrack \vec w_{1,k}^{0T} \sum_{i=1}^m \sigma_i \vec x_i \Rbrack.
\end{equation*}
The remaining expectation evaluates to zero irrespective of which $k$ is selected by the max, hence nullifying the whole term, and an application of Theorem~\ref{thm:rademacher-generalisation} completes the proof.
\end{proof}

\section{Proof of Theorem~\ref{thm:frobenius-rademacher}}
\label{sec:proof-frobenius-rademacher}
The proof of Theorem~\ref{thm:frobenius-rademacher} follows the same structure as the proof for Theorem~\ref{thm:inf-rademacher}. For conciseness, some details are omitted due to their similarity with the previous proof. We first provide a lemma demonstrating the peeling argument for Frobenius norm-bounded networks.

\begin{lemma}
\label{lem:frobenius-peeling}
For $\mathcal{F}_F$ as defined in Section~\ref{sec:theory}, the following inequality holds
\begin{equation*}
    \ExpSigma \Lbrack \sup_{W_{1:k}} \Pipes \sum_{i=1}^m \sigma_{i} f_k(\vec x_i) \Pipes_{2} \Rbrack \leq \sqrt{m} C_2 \prod_{j=1}^k 2B_j^{F}
\end{equation*}
where $f = \phi_k \circ ... \circ \phi_1$, $\varphi$ is the ReLU activation function, and $\|\vec x_i\|_2 \leq C_2$.
\end{lemma}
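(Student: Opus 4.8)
The plan is to mirror the peeling argument of Lemma~\ref{lem:inf-peeling}, but with the Euclidean norm playing the role of the MARS norm and Cauchy--Schwarz replacing H\"older's inequality. First I would peel off the outermost layer: writing $f_k(\vec x_i) = \varphi(W_k f_{k-1}(\vec x_i))$ componentwise, I would express $\| \sum_i \sigma_i \varphi(W_k f_{k-1}(\vec x_i)) \|_2$ as the square root of a sum over output coordinates, introduce a fresh auxiliary unit-norm vector $\vec u$ (or a supremum over $\|\vec u\|_2 \le 1$) to linearise the $\ell_2$ norm via $\|\vec z\|_2 = \sup_{\|\vec u\|_2 \le 1} \vec u^T \vec z$, and then apply the contraction principle of~\citet{ledoux1991} to strip the $1$-Lipschitz ReLU (picking up the factor of $2$, exactly as in Lemma~\ref{lem:inf-peeling}). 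After contraction I would have $\vec u^T W_k = \sum_j u_j \vec w_{k,j}^T$ acting on $\sum_i \sigma_i f_{k-1}(\vec x_i)$; bounding $\| W_k^T \vec u \|_2 \le \|W_k\|_F \|\vec u\|_2 \le B_k^F$ by Cauchy--Schwarz together with the fact that the spectral norm is dominated by the Frobenius norm then yields
\begin{equation*}
    \ExpSigma \Lbrack \sup_{W_{1:k}} \Pipes \sum_{i=1}^m \sigma_i f_k(\vec x_i) \Pipes_2 \Rbrack \le 2 B_k^F \, \ExpSigma \Lbrack \sup_{W_{1:k-1}} \Pipes \sum_{i=1}^m \sigma_i f_{k-1}(\vec x_i) \Pipes_2 \Rbrack.
\end{equation*}

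Iterating this inequality down through all $k$ layers collapses the product of weight norms out front and leaves the base expectation $\ExpSigma \| \sum_i \sigma_i \vec x_i \|_2$. I would bound this by Jensen's inequality: $\ExpSigma \| \sum_i \sigma_i \vec x_i \|_2 \le \sqrt{ \ExpSigma \| \sum_i \sigma_i \vec x_i \|_2^2 } = \sqrt{ \sum_i \|\vec x_i\|_2^2 } \le \sqrt{m}\, C_2$, using independence and $\Exp \sigma_i \sigma_{i'} = \delta_{i i'}$. Combining the two steps gives exactly $\sqrt{m}\, C_2 \prod_{j=1}^k 2 B_j^F$.

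The main obstacle, and the only place where the ReLU hypothesis is genuinely used, is the peeling step: to apply the Ledoux--Talagrand contraction one needs the outer nonlinearity to be applied coordinatewise to a linear function of the weights while the vector being contracted is suitably structured, and the standard trick for the Euclidean norm (unlike the $\ell_\infty$ norm, which picks out a single coordinate) requires introducing the dual vector $\vec u$ before contracting so that the composition is scalar-valued in each term of the sum. Care is needed that the supremum over $\vec u$ can be pulled outside the Rademacher expectation and that the contraction constant is handled correctly for a vector-valued map; restricting to ReLU (rather than a general $1$-Lipschitz $\varphi$) makes the positive-homogeneity available so that the $\|\vec u\|_2 \le 1$ absorption and the contraction interact cleanly. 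The remaining steps --- Cauchy--Schwarz, the Frobenius-dominates-spectral bound, and the Jensen/variance computation at the base --- are routine.
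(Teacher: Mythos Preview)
Your overall plan---peel one layer at a time picking up a factor $2B_j^F$, then bound the base expectation by Jensen---matches the paper, and your treatment of the base case $\ExpSigma\|\sum_i\sigma_i\vec x_i\|_2\le\sqrt{m}\,C_2$ is exactly what the paper does.

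The gap is in the peeling step itself. After linearising with the dual vector you have
\[
\ExpSigma\Bigg[\sup_{\|\vec u\|_2\le 1,\,W_{1:k}}\sum_{i=1}^m\sigma_i\,\vec u^{T}\varphi\big(W_k f_{k-1}(\vec x_i)\big)\Bigg],
\]
and you want to ``contract away'' $\varphi$ to reach $\sum_i\sigma_i\,\vec u^{T}W_k f_{k-1}(\vec x_i)$. But the scalar Ledoux--Talagrand lemma you invoke does not give this: the integrand $\vec u^{T}\varphi(W_k z)=\sum_j u_j\varphi(\vec w_{k,j}^{T}z)$ is a \emph{vector-input} Lipschitz function of $W_k f_{k-1}(\vec x_i)$, not a scalar contraction of $\vec u^{T}W_k f_{k-1}(\vec x_i)$. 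Because the $u_j$ may be negative you cannot absorb them into the ReLU via positive homogeneity, and the Maurer-type vector contraction would produce a doubly-indexed Rademacher sum, not the clean $\vec u^{T}W_k\sum_i\sigma_i f_{k-1}(\vec x_i)$ you need. So the inequality you write down for one peel is not justified by the argument you sketch.

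The paper sidesteps this by citing Lemma~1 of \citet{golowich2018}, whose proof uses positive homogeneity in a different place: it shows that
\[
\sup_{\|W_k\|_F\le B_k^F}\Big\|\sum_i\sigma_i\varphi(W_k f_{k-1}(\vec x_i))\Big\|_2
=\sup_{\|\vec w\|_2\le B_k^F}\Big|\sum_i\sigma_i\varphi(\vec w^{T}f_{k-1}(\vec x_i))\Big|,
\]
because by homogeneity the optimum over a Frobenius ball concentrates all the budget on a single row. After this reduction the problem is genuinely scalar, the ordinary contraction lemma applies (yielding the factor $2$), and Cauchy--Schwarz gives $\|\vec w\|_2\|\sum_i\sigma_i f_{k-1}(\vec x_i)\|_2\le B_k^F\|\sum_i\sigma_i f_{k-1}(\vec x_i)\|_2$. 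That is the step your proposal is missing; the dual-vector route does not reach it.
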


\begin{proof}
Iteratively applying Lemma 1 from \citet{golowich2018} results in
\begin{align*}
    \ExpSigma \Lbrack \sup_{W_{1:k}} \Pipes \sum_{i=1}^m \sigma_{i} f_k(\vec x_i) \Pipes_{2} \Rbrack \leq \ExpSigma \Lbrack \Pipes \sum_{i=1}^m \sigma_{i} \vec x_i \Pipes_2 \Rbrack \prod_{j=1}^k 2B_j^{F},
\end{align*}
and the expectation can be further bounded by
\begin{align*}
    \ExpSigma \Lbrack \Pipes \sum_{i=1}^m \sigma_{i} \vec x_i \Pipes_2 \Rbrack &= \ExpSigma \Lbrack \sqrt{\Pipes \sum_{i=1}^m \sigma_{i} \vec x_i \Pipes_2^2} \Rbrack \\
    &\leq \sqrt{\ExpSigma \Lbrack \Pipes \sum_{i=1}^m \sigma_{i} \vec x_i \Pipes_2^2 \Rbrack} \\
    &=\sqrt{\Pipes \sum_{i=1}^m \vec x_i \Pipes_2^2} \\
    &= \sqrt{m}C_2.
\end{align*}
\end{proof}

Next, a lemma corresponding to the splitting method is given.

\begin{lemma}
\label{lem:frobenius-splitting}
For $\mathcal{F}_F$ as given in Section~\ref{sec:theory}, the following inequality holds
\begin{align*}
    \ExpSigma \Lbrack \sup_{W_{1:k}} \vec v^T \sum_{i=1}^m \sigma_i f_k(\vec x_i) \Rbrack &\leq \sqrt{mn} C_2 \|\vec v\|_2 D_k^F \prod_{j=1}^{k-1} 2B_j^F \\
    &+ \sqrt{n} \|\vec v\|_2 \max_j \ExpSigma \Lbrack \sup_{W_{1:k-1}} \vec w_{k,j}^{0T} \sum_{i=1}^m \sigma_i f_{k-1}(\vec x_i) \Rbrack,
\end{align*}
where $f_k$, $\vec x_i$, and $C_2$ are as defined in Lemma~\ref{lem:frobenius-peeling}, and $\vec v$ is a fixed vector in $\mathbb{R}^n$.
\end{lemma}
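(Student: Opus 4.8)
The plan is to follow the proof of Lemma~\ref{lem:inf-splitting} essentially verbatim, replacing every appeal to H\"older's inequality between $\ell^1$ and $\ell^\infty$ with a Cauchy--Schwarz step between $\ell^2$ and $\ell^2$, and replacing the final invocation of Lemma~\ref{lem:inf-peeling} with Lemma~\ref{lem:frobenius-peeling}. Each of these substitutions is where a $\sqrt{n}$ factor enters, which is precisely the source of the $\sqrt{n}$ and $\sqrt{mn}$ appearing on the right-hand side of the claimed bound (and, per the discussion after Theorem~\ref{thm:frobenius-rademacher}, the origin of the feature-map-resolution dependence).

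Concretely, I would first expand $f_k = \varphi(W_k f_{k-1})$ with $\varphi = \mathrm{ReLU}$ and write $\vec v^T \sum_i \sigma_i \varphi(W_k f_{k-1}(\vec x_i)) = \sum_{j=1}^n v_j \sum_i \sigma_i \varphi(\vec w_{k,j}^T f_{k-1}(\vec x_i))$, then move the sum over $j$ outside the supremum and apply the Lipschitz contraction inequality to strip the activation, obtaining the bound $\sum_{j=1}^n |v_j|\, \ExpSigma[\sup_{W_{1:k}} \vec w_{k,j}^T \sum_i \sigma_i f_{k-1}(\vec x_i)]$. Writing $\vec w_{k,j} = (\vec w_{k,j} - \vec w_{k,j}^0) + \vec w_{k,j}^0$ and splitting the supremum over the resulting two summands bounds this by $\sum_{j=1}^n |v_j|\, \ExpSigma[\sup_{W_{1:k}} (\vec w_{k,j} - \vec w_{k,j}^0)^T \sum_i \sigma_i f_{k-1}(\vec x_i)] + \sum_{j=1}^n |v_j|\, \ExpSigma[\sup_{W_{1:k-1}} \vec w_{k,j}^{0T} \sum_i \sigma_i f_{k-1}(\vec x_i)]$, where, exactly as in the MARS proof, non-negativity of the Rademacher averages lets me drop the absolute values that would formally enclose each expectation.

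Next I would apply Cauchy--Schwarz in two places. For any nonnegative sequence $(a_j)$ one has $\sum_{j=1}^n |v_j| a_j \le \|\vec v\|_2 \big(\sum_j a_j^2\big)^{1/2} \le \sqrt{n}\,\|\vec v\|_2 \max_j a_j$; applying this to each of the two sums turns $\sum_j |v_j|$ into $\sqrt{n}\,\|\vec v\|_2 \max_j$, which already puts the pre-trained-weights term into its claimed form. For the distance term, I move $\max_j$ inside the expectation and apply Cauchy--Schwarz to the inner product, giving $(\vec w_{k,j} - \vec w_{k,j}^0)^T \sum_i \sigma_i f_{k-1}(\vec x_i) \le \|\vec w_{k,j} - \vec w_{k,j}^0\|_2\, \big\| \sum_i \sigma_i f_{k-1}(\vec x_i) \big\|_2$; since $\max_j \|\vec w_{k,j} - \vec w_{k,j}^0\|_2 \le \|W_k - W_k^0\|_F \le D_k^F$, this term is at most $\sqrt{n}\,\|\vec v\|_2\, D_k^F\, \ExpSigma[\sup_{W_{1:k-1}} \|\sum_i \sigma_i f_{k-1}(\vec x_i)\|_2]$, and Lemma~\ref{lem:frobenius-peeling} bounds the remaining expectation by $\sqrt{m}\,C_2 \prod_{j=1}^{k-1} 2B_j^F$, yielding $\sqrt{mn}\,C_2\,\|\vec v\|_2\, D_k^F \prod_{j=1}^{k-1} 2B_j^F$ and completing the argument.

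I expect the only genuinely delicate point to be the bookkeeping of the $\sqrt{n}$ factors: one must be careful that the crude step $\big(\sum_j a_j^2\big)^{1/2} \le \sqrt{n}\max_j a_j$ is what is used, since replacing $\ell^\infty$/$\ell^1$ duality by $\ell^2$ self-duality is what forces this loss and hence the resolution dependence highlighted in the paper. Everything else — the contraction step, the add-and-subtract split, and dropping absolute values via non-negativity — is structurally identical to the proof of Lemma~\ref{lem:inf-splitting}, and the ReLU hypothesis enters only through the appeal to Lemma~\ref{lem:frobenius-peeling}.
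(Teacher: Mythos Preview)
Your proposal is correct and follows essentially the same route as the paper: start from the decomposition obtained in the proof of Lemma~\ref{lem:inf-splitting}, then handle each of the two sums via Cauchy--Schwarz in place of the $\ell^1/\ell^\infty$ H\"older step, and finish the distance term with Lemma~\ref{lem:frobenius-peeling}. The only cosmetic difference is that for the distance term the paper applies Cauchy--Schwarz to get $\|\vec v\|_2\sqrt{\sum_j a_j^2}$ and then bounds every $a_j$ by the same $j$-independent quantity (so the sum collapses to $\sqrt{n}$ times that quantity), whereas you insert the crude bound $\sqrt{\sum_j a_j^2}\le\sqrt{n}\max_j a_j$ first and then bound the max; both orderings yield the identical $\sqrt{n}\,\|\vec v\|_2 D_k^F$ factor.
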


\begin{proof}
From the proof for Lemma~\ref{lem:inf-splitting}, we have
\begin{align*}
    \ExpSigma \Lbrack \sup_{W_{1:k}} \vec v^T \sum_{i=1}^m \sigma_i f_k(\vec x_i) \Rbrack &\leq \sum_{j=1}^n |v_j| \ExpSigma \Lbrack \sup_{W_{1:k}} (\vec w_{k,j} - \vec w_{k,j}^0)^T \sum_{i=1}^m \sigma_{i} \vec f_{k-1}(\vec x_i) \Rbrack \\
    &+ \sum_{j=1}^n |v_j| \ExpSigma \Lbrack \sup_{W_{1:k-1}} \vec w_{k,j}^{0T} \sum_{i=1}^m \sigma_{i} f_{k-1}(\vec x_i) \Rbrack,
\end{align*}
of which the first term be further bounded
\begin{align*}
    &\sum_{j=1}^n |v_j| \ExpSigma \Lbrack \sup_{W_{1:k}} (\vec w_{k,j} - \vec w_{k,j}^0)^T \sum_{i=1}^m \sigma_{i} \vec f_{k-1}(\vec x_i) \Rbrack \\
    &\leq \|\vec v\|_2 \sqrt{\sum_{j=1}^n \ExpSigma \Lbrack \sup_{W_{1:k}} (\vec w_{k,j} - \vec w_{k,j}^0)^T \sum_{i=1}^m \sigma_{i} \vec f_{k-1}(\vec x_i) \Rbrack^2} \\
    &\leq \|\vec v\|_2 \sqrt{\sum_{j=1}^n \ExpSigma \Lbrack \sup_{W_{1:k}} D_k^F \Pipes \sum_{i=1}^m \sigma_{i} \vec f_{k-1}(\vec x_i) \Pipes_2 \Rbrack^2} \\
    &= \sqrt{n} \|\vec v\|_2 D_k^F \ExpSigma \Lbrack \sup_{W_{1:k}} \Pipes \sum_{i=1}^m \sigma_{i} \vec f_{k-1}(\vec x_i) \Pipes_2 \Rbrack \\
    &\leq \sqrt{mn} C_2 \|\vec v\|_2 D_k^F \prod_{j=1}^{k-1} 2B_j^F,
\end{align*}
where the first two inequalities are due to Cauchy-Schwarz, and the third is via Lemma~\ref{lem:frobenius-peeling}. Moving on to the second term, we begin by applying Cauchy-Schwarz,
\begin{align*}
    \sum_{j=1}^n |v_j| \ExpSigma \Lbrack \sup_{W_{1:k-1}} \vec w_{k,j}^{0T} \sum_{i=1}^m \sigma_{i} f_{k-1}(\vec x_i) \Rbrack &\leq \|\vec v\|_2 \sqrt{\sum_{j=1}^n \ExpSigma \Lbrack \sup_{W_{1:k-1}} \vec w_{k,j}^{0T} \sum_{i=1}^m \sigma_{i} \vec f_{k-1}(\vec x_i) \Rbrack^2} \\
    &\leq \|\vec v\|_2 \sqrt{n \max_j \ExpSigma \Lbrack \sup_{W_{1:k-1}} \vec w_{k,j}^{0T} \sum_{i=1}^m \sigma_{i} \vec f_{k-1}(\vec x_i) \Rbrack^2} \\
    &= \sqrt{n} \|\vec v\|_2 \max_j \ExpSigma \Lbrack \sup_{W_{1:k-1}} \vec w_{k,j}^{0T} \sum_{i=1}^m \sigma_{i} \vec f_{k-1}(\vec x_i) \Rbrack,
\end{align*}
which concludes the proof.
\end{proof}

Finally, we prove Theorem~\ref{thm:frobenius-rademacher}.

\begin{proof}
The proof for this theorem follows the same process as the proof for Theorem~\ref{thm:inf-rademacher}. The main result of \citet{maurer2016} is used to obtain
\begin{equation*}
    \EmpRad(\mathcal{F}_F) \leq \frac{\sqrt{2} \rho}{m} \ExpSigma \Lbrack \sup_{W_{1:L}} \sum_{j=1}^c \sum_{i=1}^m \sigma_{i,j} (\phi_L \circ ... \circ \phi_1)_j(\vec x_i) \Rbrack,
\end{equation*}
which, by bringing the summation over $c$ outside the expectation and using the Lipschitz contraction inequality, can be bounded from above by
\begin{equation*}
    \sum_{j=1}^c \frac{\sqrt{2} \rho}{m} \ExpSigma \Lbrack \sup_{W_{1:L}} (\vec w_{L,j} - \vec w_{L,j}^0 + w_{L,j}^0)^T \sum_{i=1}^m \sigma_i f_{L-1}(\vec x_i) \Rbrack.
\end{equation*}
This can be expanded into two expectations, as in the proof for Theorem~\ref{thm:inf-rademacher}, and Lemma~\ref{lem:frobenius-peeling} applied to the first term, yielding
\begin{equation*}
    \EmpRad(\mathcal{F}_F) \leq \frac{\sqrt{2} c \rho C_2 \frac{D_L^F}{B_L^F} \prod_{i=1}^L 2B_i^F}{\sqrt{m}} + \sum_{j=1}^c \frac{\sqrt{2} \rho}{m} \ExpSigma \Lbrack \sup_{W_{1:L-1}} \vec w_{L,j}^{0T} \sum_{i=1}^m \sigma_i f_{L-1,k}(\vec x_i) \Rbrack.
\end{equation*}
Applying Lemma~\ref{lem:frobenius-splitting} to the expectation results in
\begin{align*}
    &\ExpSigma \Lbrack \sup_{W_{1:L}} \vec w_{L,j}^{0T} \sum_{i=1}^m \sigma_i f_{L-1}(\vec x_i) \Rbrack \\
    &\leq \sqrt{mn_L} C_2 \|\vec w_{L,j}^0\| D_{L-1}^F \prod_{i=1}^{L-2} 2B_i^F + \sqrt{n_L}\|\vec w_{L,j}^0\|_2 \max_k \ExpSigma \Lbrack \sup_{W_{1:L-1}} \vec w_{L-1,k}^{0T} \sum_{i=1}^m \sigma_i f_{L-2}(\vec x_i) \Rbrack \\
    &\leq \sqrt{m} C_2 \frac{D_{L-1}^F}{2B_{L-1}^F \prod_{i=1}^{L-1}\sqrt{n_{i}}} \prod_{i=1}^L 2\sqrt{n_i} B_i^F + \sqrt{n_L} B_L^F \max_k \ExpSigma \Lbrack \sup_{W_{1:L-1}} \vec w_{L-1,k}^{0T} \sum_{i=1}^m \sigma_i f_{L-2}(\vec x_i) \Rbrack.
\end{align*}
As in the proof for Theorem~\ref{thm:inf-rademacher}, we note that $\max_k \|w_{l,k}\|_2 \leq B_l^F$ and recursively apply Lemma~\ref{lem:frobenius-splitting} until we obtain the following bound
\begin{align*}
    \EmpRad(\mathcal{F}_F) \leq& \frac{\sqrt{2}cpC_2 \sum_{j=1}^L \frac{D_{j}^F}{2B_{j}^F \prod_{i=1}^{j}\sqrt{n_{i}}} \prod_{i=1}^L 2\sqrt{n_i} B_i^F}{\sqrt{m}} \\
    &+ \sum_{j=1}^c \frac{\sqrt{2}\rho \prod_{i=2}^L \sqrt{n_i} B_i^F}{m} \max_k \ExpSigma \Lbrack \vec w_{1,k}^{0T} \sum_{i=1}^m \sigma_i \vec x_i \Rbrack,
\end{align*}
of which the right term is equal to zero. The result follows from Theorem~\ref{thm:rademacher-generalisation}.
\end{proof}

\section{Comparison with Previous Bounds}
This section provides a comparison of our bounds with previous work that investigates bounding the generalisation gap,\footnote{To simplify the comparison, we omit the terms proportional to $\sqrt{\frac{\text{ln}(1/\delta)}{m}}$ that appear in all bounds. It should be noted that all bounds hold with probability $1-\delta$.}
\begin{equation*}
    G = \underset{\vec x, y}{\Exp} \Big \lbrack f(y, \vec x) \Big \rbrack - \frac{1}{m} \sum_{i=1}^m f(y_i, \vec x_i).
\end{equation*}
In particular, we consider the main result of \citet{bartlett2017},
\begin{equation}
\label{eq:bartlett-bound}
    G \leq O \Bigg ( \frac{\text{ln}(N)}{\sqrt{m}} \rho C_2 \Bigg ( \sum_{j=1}^L \Bigg ( \frac{D_j^{2,1}}{B_j^2} \Bigg )^{2/3} \Bigg )^{3/2} \prod_{j=1}^L B_j^2 \Bigg ),
\end{equation}
and using our notation the main result of \citet{long2019} is,
\begin{equation}
    \label{eq:long-bound}
    G \leq O \Bigg ( \sqrt{\frac{N}{m}} \rho C_2 \sum_{j=1}^L D_j^2 \prod_{j=1}^L B_j^2 \Bigg ).
\end{equation}
These bounds depend on slightly different quantities: we use $N$ to represent the number of parameters in the network, $B_j^2$ and $D_j^2$ indicates upper bounds on the spectral norm and the the associated distance metric, and $D_j^{2,1}$ corresponds to an upper bound on distance metric induced by the matrix $(2,1)$-norm,
\begin{equation*}
    \|W\|_{2,1} = \sum_{j} \sqrt{\sum_i W_{j,i}^2}.
\end{equation*}

In contrast to the aforementioned bounds, which were shown to be true via covering number arguments, the bounds presented in this paper are proved by directly bounding the empirical Rademacher complexity. As a consequence of these different proof techniques, the constants that appear in each bound are quite different. The covering number bound both have explicit dependences on the number of parameters in the network---with the bound of \citet{long2019} being much stronger than that of \citet{bartlett2017}. Our bound does not have this dependence, but it does incur an extra factor of two for every application of the Ledoux-Talagrand contraction inequality used in the peeling argument. This results in an explicit exponential dependence on the depth of the network. We note that recent work by \citet{golowich2018} shows how the peeling method can be adapted to reduce this to a very mild dependence in some situations.

In addition to inspecting the constants in each bound, one can consider how the norms compare. The $(2,1)$-norm used in Equation~\ref{eq:bartlett-bound}, the spectral norm bound from \citet{long2019}, and the Frobenius norm used in Theorem~\ref{thm:frobenius-rademacher} are all sensitive to the resolution of the intermediate feature maps in convolutional networks, as they do not take into account the parameter sharing in convolutional layers. In contrast, the MARS norm bound in Theorem~\ref{thm:inf-rademacher} is independent of the feature map sizes. This is because the structure of $W$ in a convolutional layer is a block matrix, where each block is, in turn, a doubly block circulant matrix, so each unique weight in the layer appears at most once in each row. The MARS norm only considers the ``worst'' row in the matrix, so each weight is considered at most once, irrespective of feature map resolution. As such, the MARS bound are more suited to explaining generalisation in convolutional networks. Furthermore, the way in which the norms are utilised in the bounds are different. Consider the value of each term in the summations of the bounds; for the bound of \citet{bartlett2017}, the value of each term can be greater than one, because different norms are used in the numerator and denominator. \citet{long2019} do not perform any normalisation, so each term can be greater than one. In contrast, because the numerator and denominator of the terms in the summations of our bound use the same norms, and from the definition of our hypothesis class, each term is bounded from above by one.




\section{Algorithm Pseudocode}
Algorithm~\ref{alg:projected-subgrad} provides high-level pseudocode demonstrating how our proposed projection functions can be integrated with a generic gradient-based optimiser.
\begin{algorithm}[t]
\caption{A projected stochastic subgradient method for enforcing a distance constraint on each layer of a fine-tuned neural network, where $f_t$ is the objective function evaluated on a batch of data indexed by $t$, $\text{update}(\cdot)$ is a weight update rule such as Adam, and $\pi$ is a projection function.}
\label{alg:projected-subgrad}
\begin{algorithmic}
    \STATE \textbf{Inputs:} $W_{1:L}^{(0)}$, and $\gamma_i$
    \STATE $t \gets 0$
    \WHILE{$W_{1:L}$ not converged}
        \STATE $t \gets t + 1$
        \STATE $\widehat{W}_{1:L}^{(t)} \gets \text{update}(W_{1:L}^{(t-1)}, \nabla_{W_{1:L}} f_t(W_{1:L}^{(t-1)}, V^{(t-1)}))$
        \FOR{$j = 1$ {\bfseries to} $L$}
            \STATE $W_{j}^{(t)} \gets \pi(W_j^{(0)}, \widehat{W}_j^{(t)}, \gamma_j)$
        \ENDFOR
    \ENDWHILE
\end{algorithmic}
\end{algorithm}

\section{Convergence of Penalty Methods}
The usual motivation for using a penalty term to enforce a norm constraint is based on the equivalence of constraints and penalties in the case of linear models with convex loss functions~\citep{oneto2016}. This equivalence tells us that for each choice of penalty hyperparameter, $\lambda$, there is a corresponding setting for a constraint hyperparameter, $\gamma$, such that the penalty-based approach and the projection-based optimiser will search the same hypothesis space. Although our case is quite different from the setting of \citet{oneto2016}, since we make use of nonlinear models and apply multiple norm constraints, it is still common to apply penalties to neural networks when one wishes to encourage some sort of structure in the weights of a model. The technique used to prove the equivalence in the linear case is based on reinterpreting the penalised loss function as the Lagrangian of the constrained problem. The Karush--Kuhn--Tucker theorem states that stationary points of the Lagrangian are local optimisers of the constrained problem. The problem with applying this in a deep learning context is that convergence is almost always measured in terms of predictive performance, rather than the training loss. As a consequence, it is rare that the fine-tuned weights will be a stationary point of the Lagrangian and the constraints implied by the values chosen for the penalty hyperparameter may not be enforced. In this case, the model resulting from the training process is not guaranteed to come from the pre-specified hypothesis class. In contrast, the projection-based methods are guaranteed to provide a model from the chosen hypothesis class even if a stationary point is not found.

\begin{figure*}[t!]
    \centering
    \subfloat[ResNet-101 Loss]{\resizebox{0.5\textwidth}{!}{\begin{tikzpicture}
        \begin{axis}[xlabel={Epoch}, ylabel={Training Loss}]
            \addplot [color=teal] table [y index=0, x expr=\coordindex+1] {data/resnet101-train-loss.csv};
        \end{axis}
    \end{tikzpicture}}}
    \subfloat[ResNet101 Accuracy]{\resizebox{0.5\textwidth}{!}{\begin{tikzpicture}
        \begin{axis}[xlabel={Epoch}, ylabel={Accuracy}, legend pos=south east]
            \addplot [color=orange] table [y index=0, x expr=\coordindex+1] {data/resnet101-kkt-train-acc.csv};
            \addplot [color=violet] table [y index=0, x expr=\coordindex+1] {data/resnet101-kkt-test-acc.csv};
            \legend{Train, Test}
        \end{axis}
    \end{tikzpicture}}}
    
    \subfloat[EfficientNetB0 Loss]{\resizebox{0.5\textwidth}{!}{\begin{tikzpicture}
        \begin{axis}[xlabel={Epoch}, ylabel={Training Loss}]
            \addplot [color=teal] table [y index=0, x expr=\coordindex+1] {data/enb0-train-loss.csv};
        \end{axis}
    \end{tikzpicture}}}
    \subfloat[EfficientNetB0 Accuracy]{\resizebox{0.5\textwidth}{!}{\begin{tikzpicture}
        \begin{axis}[xlabel={Epoch}, ylabel={Accuracy}, legend pos=south east]
            \addplot [color=orange] table [y index=0, x expr=\coordindex+1] {data/enb0-kkt-train-acc.csv};
            \addplot [color=violet] table [y index=0, x expr=\coordindex+1] {data/enb0-kkt-test-acc.csv};
            \legend{Train, Test}
        \end{axis}
    \end{tikzpicture}}}
    \caption{Plots demonstrating the convergence properties of MARS-SP when applied to ResNet101 and EfficientNetB0 when fine-tuned on the pets dataset. Subfigures (a) and (c) demonstrate that the training loss does not necessarily converge at the same point as the accuracies in subfigures (b) and (d) have converged. Note that the learning rate is reduced by a factor of 10 at epoch 20.}
    \label{fig:gradient-norms}
\end{figure*}
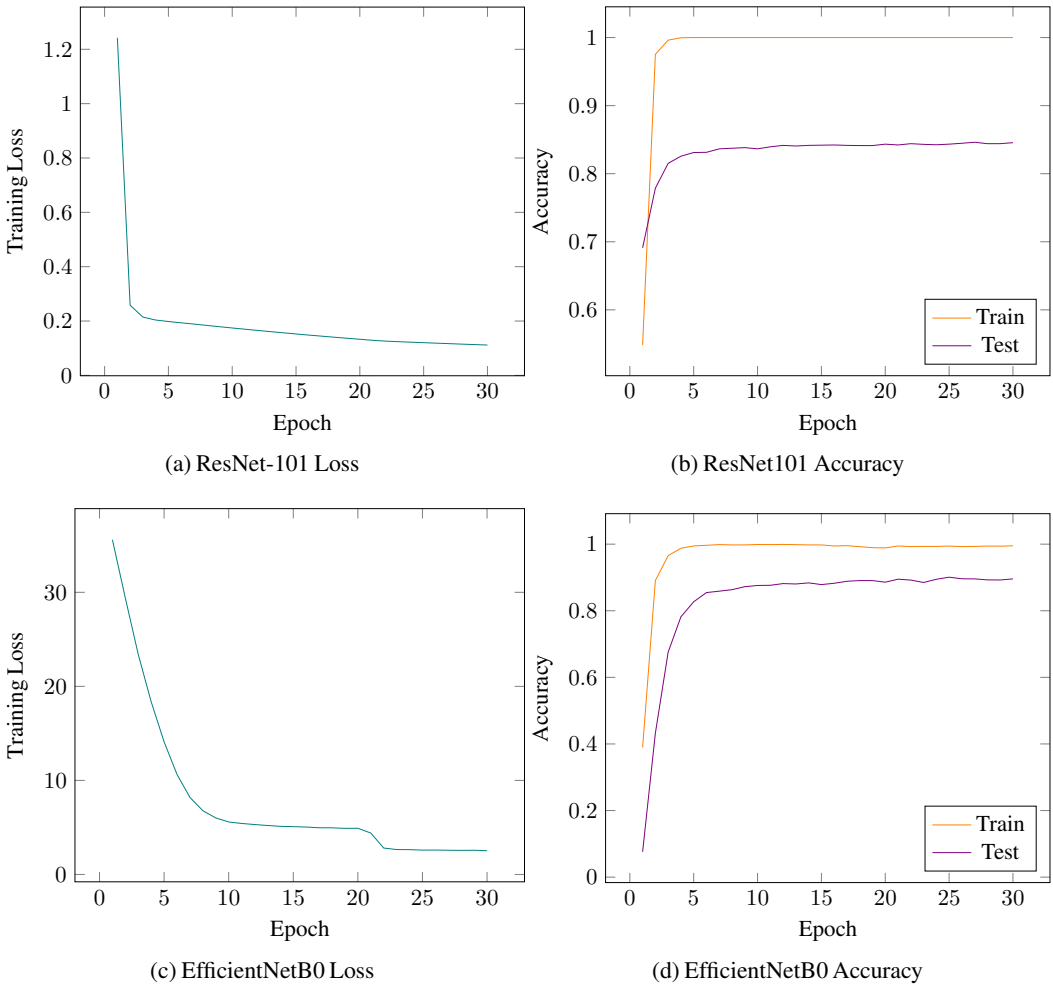

We explore this further by examining the when the training loss converges. Figure~\ref{fig:gradient-norms} shows plots of the training loss, as well as the training and testing accuracy, over the course of training ResNet101 and EfficientNetB0 networks on the pets dataset while applying the MARS-SP regulariser. From these plots we can see that the training and testing accuracy converges in the expected manner, but the training loss either continues to decrease throughout training or converges significantly later than the validation accuracy. This corroborates our hypothesis that users who stop based on train or validation accuracy are likely not to be at or near extrema of the loss, and hence not benefit from any enforced constraints.

\section{Additional Experiment Details}
We provide a summary of each datasets used in our experiments in Table~\ref{tab:datasets}. We selected datasets from CVonline\footnote{http://homepages.inf.ed.ac.uk/rbf/CVonline/Imagedbase.htm} that were of a size for which fine-tuning, rather than training from scratch, is required to achieve good performance. Notably, the CUB-2011 and Stanford Dogs datasets used by prior work~\citep{li2018a, li2019b} to evaluate fine-tuning regularisers are not included. This is because these two datasets overlap with ImageNet, the dataset used during pre-training. As such, we cannot be sure that the pre-trained networks have not already been trained on the test sets we are using for evaluation. Like~\citet{li2018a, li2019b}, we also evaluate on Caltech-256. However, there are no standard training, validation, and testing splits for this dataset, and the prior work did not release their splits. As a result, we constructed our own split that consists of 30, 20, and 20 examples per class for the training, validation, and testing folds, respectively.

\begin{table}[t]
    \centering
    \caption{Statistics for the datasets used throughout the experiments. The Train, Validation, and Test columns contain the number of instances in each of the corresponding subsets.}
    \label{tab:datasets}
    \begin{tabular}{lcccc}
        \toprule
        Dataset & Train & Validation & Test & Classes \\
        \midrule
        Aircraft~\citep{maji2013} & 3,334 & 3,333 & 3,333 & 100 \\
        Butterfly~\citep{chen2018a} & 5,135 & 5,135 & 15,009 & 200 \\
        Flowers~\citep{nilsback2008} & 1,020 & 1,020 & 6,149 & 102 \\
        Pets~\citep{parkhi2012} & 2,000 & 2,000 & 3,390 & 37 \\
        PubFig~\citep{pinto2011} & 10,518 & 1,660 & 1,660 & 83 \\
        DTD~\citep{cimpoi2014} & 1,880 & 1,880 & 1,880 & 47 \\
        Caltech~\citep{griffin2007} & 7,680 & 5,120 & 5,120 & 256 \\
        \bottomrule
    \end{tabular}
\end{table}

All regularisation hyperparameters are tuned using the HyperOpt package developed by~\citet{bergstra2015}, which uses a tree of Parzen estimators to generate promising combinations of hyperparameters based on previously observed losses. To remain comparable with $\ell^2$-SP and DELTA, we make use of only two hyperparameters for each of our proposed methods: the hyperparameter corresponding to the layer trained from scratch ($\lambda_L$ and $\gamma_L$ for the penalty and constraint methods, respectively), and a single hyperparameter applied to all layers being fine-tuned ($\lambda_j$ and $\gamma_j$). This also makes the hyperparameter optimisation process reliable, as the number of hyperparameters that must be tuned is not proportional to the number of layers being fine-tuned. HyperOpt is given 20 search iterations and hyperparameter combination with the best validation loss is selected.

\subsection{Comparison using DELTA Experimental Setup}
In order to facilitate a direct comparison with previously published results achieved by $\ell^2$-SP and DELTA, we conduct experiments using the same experimental setup as DELTA. These experiments involve data augmentation; namely, random horizontal flips and random crops. A known issue with the experimental setup used for evaluation in prior work is the overlap between the ImageNet dataset used for pre-training and the datasets used for evaluation. We observe two interesting trends in the results presented in Table~\ref{tab:delta-comparison}. First, our method is still competitive with the existing fine-tuning regularisers when using the flawed experimental protocol utilised by previous work. Second, there is a very narrow range in performance for all methods compared to the experiments conducted in Section~\ref{sec:experiments}. We suspect this is due to the overlap between the pre-training set and the datasets used for fine-tuning. Any strategy that keeps the weights or features sufficiently close to their pre-trained values will is likely sufficient when there is overlap between the pre-training set and the dataset used for fine-tuning.

\begin{table}[t]
    \centering
    \caption{Comparison of fine-tuning methods on DELTA experimental setup using ResNet-101. Performance measurements for techniques other than MARS-PGM are taken from \citet{li2019b}.}
    \label{tab:delta-comparison}
    \begin{tabular}{lccc}
        \toprule
        Method                      & Stanford Dogs & CUB           & Food \\
        \midrule
        $\ell^2$-FR                 & 84.7$\pm$0.1  & 61.5$\pm$0.1  & 64.3$\pm$0.1 \\
        $\ell^2$                    & 83.3$\pm$0.2  & 78.4$\pm$0.1  & 85.3$\pm$0.1 \\
        $\ell^2$-SP~\citep{li2018a} & 88.3$\pm$0.2  & 79.5$\pm$0.1  & \textbf{86.4$\pm$0.1} \\
        DELTA~\citep{li2019b}       & \textbf{88.7$\pm$0.1}  & \textbf{80.5$\pm$0.1}  & 86.3$\pm$0.2 \\
        \midrule
        MARS-PGM (Ours)             & \textbf{88.7$\pm$0.2}  & 79.6$\pm$0.1  & 86.3$\pm$0.1 \\
        \bottomrule
    \end{tabular}
\end{table}

\end{document}